\DeclarePairedDelimiter\ceil{\lceil}{\rceil}
\newtheorem{lemma}{Lemma}
\newtheorem{theorem}{Theorem}
\newtheorem{proposition}{Proposition}
\newtheorem{proof}{Proof}
\newtheorem{assumption}{Assumption}
\title{Finite Horizon Q-learning: Stability, Convergence, Simulations and an Application on Smart Grids}
\author{ Vivek VP\\
	Department of Computer Science and Automation\\
	Indian Institute of Science\\
	Bangalore, 560012 \\
	\texttt{vivekv@iisc.ac.in} \\
	\And
	Dr Shalabh Bhatnagar \\
	Department of Computer Science and Automation\\
	Indian Institute of Science\\
	Bangalore, 560012 \\
	\texttt{shalabh@iisc.ac.in} \\
}
\begin{document}

\maketitle

\begin{abstract}
Q-learning is a popular reinforcement learning algorithm.
This algorithm has however been studied and analysed mainly in the infinite horizon setting. There are several important applications which can be modeled in the framework of finite horizon Markov decision processes. We develop a version of Q-learning algorithm for finite horizon Markov decision processes (MDP) and provide a full proof of its stability and convergence. Our analysis of stability and convergence of finite horizon Q-learning is based entirely on the ordinary differential equations (O.D.E) method. We also demonstrate the performance of our algorithm on a setting of random MDP along with an application on smart Grids.
\end{abstract}

\section{Introduction}
\label{introduction}

Markov decision process (MDP) is a popular framework to study sequential decision making under uncertainty. An MDP is generally defined via a 5-tuple $\langle S,A,P,R,\beta \rangle $ where $S$ is the set of states , $A$ is the set of actions, $P$ is the transition probability matrix, $R$ is the reward function and $\beta$ is a scalar that can take the values $0 < \beta \leq 1$. In infinite horizon discounted reward problems, we require $\beta<1$. Quite often, in real-life applications, $P$ and $R$ are not available to us, however, we have access to data in the form
of several `state-action-reward-next state' tuples. Reinforcement learning algorithms learn the optimal policies and value functions from such data samples. Amongst the model free reinforcement learning algorithms, Q-learning is a popular one that has been widely studied both theoretically and over a range of applications. 

Now we give a brief survey of Q-learning-type algorithms in the literature. Q-learning has been extensively studied and many improvements have been proposed to the basic algorithm in
the literature. In Double Q-learning  \cite{doubleQlearning}, two estimators of Q-values are used to improve the empirical performance by reducing the over-estimations of Q-values.
Speedy Q-learning \cite{speedyQlearning} is another algorithm where bounds on the number of iterations needed for convergence have been improved.
Generalized speedy Q-learning \cite{GSQL} improves further upon Speedy Q-learning by adopting the technique of successive relaxation. The improvement occurs because
the contraction factor of the successive relaxation Bellman operator is lower in value than that of the standard Bellman operator. In  Zap Q-learning \cite{zapQlearning}, a two-timescale update rule for matrix gain is used which results in an improvement over standard Q-learning.

Even though there are many applications that are finite horizon in nature, 
previous works have mainly dealt with the infinite horizon setting. Even if the number of stages is reasonably large, using the infinite horizon setting as an approximation introduces errors
in the solutions. For instance, a stationary policy is invariably optimal in infinite horizon settings but rarely so when the setting is finite horizon in nature.
Further, finite horizon settings allow for non-stationary transition probabilities and reward functions, unlike infinite horizon MDPs. Most of the development in reinforcement learning has been towards the design of algorithms for infinite horizon problems. Finite horizon temporal difference learning has been studied recently in \cite{sutton-finite}. A finite horizon version of
 Q-learning has been studied in \cite{finiteHorizon1998}. However, a rigorous proof of convergence has not been shown in that reference. 
 We present the finite horizon Q-learning algorithm and provide a rigorous proof of convergence along with an experimental study on random MDPs.
The main contributions in this paper are as follows:
\begin{itemize}
    \item We present a Finite Horizon Q-learning algorithm for the general case where the transition dynamics and reward structure are also stage-dependent in addition to them being dependent on states and actions.
    \item We provide a stochastic approximation based analysis for the stability and convergence of Finite Horizon Q-learning. We show that the Finite horizon Q-learning recursions
    are both stable and converge to the set of optimal Q-values almost surely.
    \item We show the results of experiments on a setting of random MDPs and observe that our results conform with the theoretical results.
\end{itemize}

The rest of the paper is arranged as follows. In the next section, we present basic results in finite horizon reinforcement learning. We present here the Finite Horizon Dynamic Programming (FHDP)
algorithm and show that it gives the optimal Q-values. This is then followed in the next section by a description of our proposed algorithm -- the Finite Horizon Q learning. In the subsequent section, we present the complete proof of stability and convergence
of our finite horizon reinforcement learning algorithm. We then present the results of experiments conducted in the setting of random MDPs in the subsequent section. The final section presents conclusions and describes future work.

\section{Preliminaries}
\label{preliminaries}

Our basic setting involves a finite horizon MDP with horizon length $N<\infty$. Let $n=0,1,\ldots,N-1$ denote the $N$ stages of decision making with $N$ as the termination instant. We consider here a setting involving cost minimization as opposed to reward maximization as it appears to be more natural for optimal control problems. Let $g_n(i,a,j)$ (resp.~$p(i,a,j)$) denote the single-stage cost (resp.~transition probability) when the state at instant $n$ is $i$ and action chosen is $a$, and the next state (i.e., the one at instant $n+1$) is $j$. Further, let $g_N(i)$ denote the terminal cost when the terminating state is $i\in S$. Let $S$ and $A$ respectively denote the state and action spaces of the MDP. In particular, we let $A(i) \subseteq A$ be the set of feasible actions in state $i$. In general, one may let $S$ and $A$ be time-dependent sets. We however select them to be time-invariant for simplicity.

Let $\pi = \{\pi_0,\pi_1,\ldots,\pi_{N-1}\}$ represent a policy where $\pi_k(i) \in A(i)$, $\forall k=0,1,\ldots,N-1$. The idea is that when following policy $\pi$, at instant $k$, the action is chosen according to the function $\pi_k$, $k=0,1,\ldots,N-1$.
Let $J^\pi(i)$, $i\in S$, denote the long-term expected cost:
\[
J^\pi(i) = E_\pi\left[\sum_{k=0}^{N-1} g_k(s_k, a_k, s_{k+1}) + g_N(s_N) \mid s_0=i\right].
\]
Let $\Pi$ denote the set of all policies as above.
The goal then is to find a policy $\pi^* \in \Pi$ that gives the optimal long-term expected cost given by
\[
J^*(i) \stackrel{\triangle}{=} J^{\pi^*}(i) = \min_{\pi\in \Pi} J^\pi(i), \mbox{ } i\in S.
\]
Define Q-values $Q_\pi(i,a)$ as follows:
\[
Q_\pi(i,a) = E[ \sum_{k=0}^{N-1} g_k(s_k,\pi_k(s_k), s_{k+1} + g_N(s_N)\]
\[ \mid s_0=i, a_0=a].
\]
Here the initial action chosen in the initial state $i$ is $a$. Subsequent actions are chosen according to the policy $\pi$ that now kicks in from instant 1 onwards. 
Let
\[
Q^*(i,a) = \min_{\pi\in\Pi} Q_\pi(i,a),
\]
where $\Pi$ is the set of policies starting from instant 0. Note however that since the initial action is $a$ (in state $s$), the initial action is not according to $\pi_0$ in general.
Consider now the finite horizon Dynamic Programming (DP) algorithm \eqref{DP1}-\eqref{DP2}
in terms of the Q-function. 

\begin{equation}
\label{DP1}
    Q_N(s_N, a_N) = g_N(s_N),
\end{equation}
\[
Q_k(s_k,a_k) = E_{s_{k+1}}[g_k(s_k,a_k,s_{k+1})
\]
\begin{equation}
    \label{DP2}
   +  \min_{a_{k+1}\in A(s_{k+1})}
    Q_{k+1}(s_{k+1},a_{k+1})],
\end{equation}
$k=N-1,N-2,\ldots,0$.

\begin{proposition}
\label{DP-optimality}
For every initial state-action tuple $(s,a)$, the optimal Q-value $Q^*(s,a)$ of the basic problem equals $Q_0(s,a)$ given by \eqref{DP1}-\eqref{DP2} proceeding backwards in time from $k=N-1$ to $k=0$. Moreover, the policy $\pi^*=\{\pi^*_0,\pi^*_1,\ldots,\pi^*_{N-1}\}$ obtained by letting $\pi^*_k(s_k) = a^*_k$ in \eqref{DP2}, $\forall s_k\in S, k=0,1,\ldots,N-1$ is optimal.
\end{proposition}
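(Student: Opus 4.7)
The plan is to prove the proposition by backward induction on the stage index $k$, piggybacking on the classical finite horizon dynamic programming principle for cost-to-go functions. First I would introduce the auxiliary cost-to-go functions $J_k(s)$ defined by $J_N(s)=g_N(s)$ and $J_k(s)=\min_{a\in A(s)} E_{s_{k+1}}[g_k(s,a,s_{k+1})+J_{k+1}(s_{k+1})]$ for $k=N-1,\ldots,0$. The standard Bellman optimality result for finite horizon MDPs (Bertsekas, Vol.~I) guarantees that $J_0(s)=J^*(s)$ and that a policy picking minimizing actions in these Bellman equations is optimal; I would invoke this as a known fact so as to focus on the Q-function version.

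Next I would establish the bridge identity $J_k(s)=\min_{a\in A(s)} Q_k(s,a)$ for all $k=0,1,\ldots,N$ by backward induction. The base case $k=N$ is immediate since $Q_N(s,a)=g_N(s)$ does not depend on $a$, so $\min_a Q_N(s,a)=g_N(s)=J_N(s)$. For the inductive step, assuming $J_{k+1}(s)=\min_{a} Q_{k+1}(s,a)$, substituting into \eqref{DP2} gives
\[
Q_k(s,a)=E_{s_{k+1}}\!\left[g_k(s,a,s_{k+1})+J_{k+1}(s_{k+1})\right],
\]
and taking the minimum over $a$ yields exactly the Bellman recursion for $J_k$, completing the induction.

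Now I would identify $Q_0(s,a)$ with $Q^*(s,a)$. Conditioning on the first transition and invoking the tower property,
\[
Q_\pi(s,a)=E\!\left[g_0(s,a,s_1)+E_\pi\!\left[\textstyle\sum_{k=1}^{N-1}g_k(s_k,\pi_k(s_k),s_{k+1})+g_N(s_N)\mid s_1\right]\right].
\]
The inner conditional expectation is bounded below pointwise by $J_1(s_1)$, and this lower bound is attained by choosing $\pi_1,\ldots,\pi_{N-1}$ to be the minimizing selectors from the Bellman equations for $J_1,\ldots,J_{N-1}$. Therefore
\[
Q^*(s,a)=\min_{\pi\in\Pi}Q_\pi(s,a)=E\!\left[g_0(s,a,s_1)+J_1(s_1)\right]=E\!\left[g_0(s,a,s_1)+\min_{a'}Q_1(s_1,a')\right]=Q_0(s,a),
\]
where the last equality uses the bridge identity at $k=1$ together with \eqref{DP2}.

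Finally, for the optimality of $\pi^*$, since $J^*(s)=J_0(s)=\min_a Q_0(s,a)=Q_0(s,\pi^*_0(s))$, and the same logic repeated at each stage shows that at every $k$ the selector $\pi^*_k$ achieves the minimum in the Bellman equation for $J_k$, the classical DP result yields $J^{\pi^*}=J^*$. The only subtle step in the whole argument is the interchange of $\min$ and $E$ when passing from $Q^*$ to $Q_0$; this is really the principle of optimality and is legitimate because future decisions $\pi_1,\ldots,\pi_{N-1}$ can be chosen as (measurable) state-dependent functions that attain the pointwise minimum, which holds here by finiteness of $S$ and $A$. I expect this measurable-selection/principle-of-optimality justification to be the main conceptual obstacle, while the inductive bookkeeping is routine.
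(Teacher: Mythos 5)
Your proof is correct, but it is organized differently from the paper's. The paper works entirely at the level of Q-functions: it defines, for every stage $k$, the ``true'' optimal stage-$k$ Q-value $Q^*_k(s_k,a_k)$ (first-stage cost plus optimal tail cost under the best tail policy) and shows by backward induction, mimicking Bertsekas' Proposition 1.3.1 but with state-action value functions, that $Q^*_k = Q_k$ for all $k$; the min--expectation interchange (principle of optimality) is carried out inside that induction. You instead import the classical finite horizon value-function result as a black box, prove the bridge identity $J_k(s)=\min_{a\in A(s)}Q_k(s,a)$ by a short backward induction, and then handle only the first transition explicitly, writing $Q_\pi(s,a)$ via the tower property and bounding the tail by $J_1(s_1)$ with attainment by greedy selectors. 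What your route buys is brevity and a clean separation of concerns (the only new content is the $J$-to-$Q$ conversion); what the paper's route buys is self-containedness at the Q-level, and it yields the stronger conclusion $Q^*_k=Q_k$ at every stage rather than just $k=0$. One small point to tighten: as stated you quote the classical result only as ``$J_0=J^*$ and the greedy policy is optimal,'' but your pointwise lower bound on the conditional tail cost by $J_1(s_1)$ actually uses the tail-subproblem form of that result, namely that $J_k$ is the optimal cost-to-go of the subproblem starting at stage $k$ for every $k$; this is exactly what the standard Bertsekas statement and proof provide, so you should cite it in that form. Your remark that the min--expectation interchange is justified by finiteness of $S$ and $A$ (so measurable selection is trivial and a single state-feedback tail policy attains the pointwise minimum for all realizations of $s_1$) is the right justification and matches the step the paper performs implicitly when it pulls the minimization over tail policies inside the expectation.
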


\begin{proof}
The proof is based on an induction argument and follows along the lines of Proposition 1.3.1 of \cite{bertsekas1} that is however shown for the case of finite horizon state-value function (not the state-action value function or the Q-value function as here).
We give the details below for completeness. Let
\[
Q^*_k(s_k,a_k) = E_{s_{k+1}}[g_k(s_k,a_k,s_{k+1}) \]\[+ \min_{\pi^{k+1}} E_{s_{k+2},\ldots,s_N}
[g_N(s_N)
+ \sum_{k+1}^{N-1} g_i(s_i,\pi_i(s_i),s_{i+1})]],
\]
$k=0,1,\ldots,N-1$ and
\[
Q^*(N)(s_N,a_N) = g_N(s_N).
\]
Assume for some $k$ and all feasible $(s_{k+1},a_{k+1})$ tuples, $Q^*_{k+1}(s_{k+1},a_{k+1}) = Q_{k+1}(s_{k+1},a_{k+1})$. Then,
\begin{align*}
& Q^*_k(s_k,a_k) = E_{s_{k+1}}\big[g_k(s_k,a_k,s_{k+1}) \\ & + \min_{\pi_{k+1}, \pi_{k+2}}
E_{s_{k+2,\ldots,s_N}}[g_N(s_N) \\ & + g_{k+1}(s_{k+1},\pi_{k+1}(s_{k+1}),s_{k+2}) \\
& + \sum_{i=k+2}^{N-1} g_i(s_i,\pi_i(s_i),s_{i+1})]\big]
\end{align*}
\[
= E_{s_{k+1}}[g_k(s_k,a_k,s_{k+1})\]\[
+ \min_{\pi_{k+1}, \pi_{k+2}}
E_{s_{k+2}}[g_{k+1}(s_{k+1},\pi_{k+1}(s_{k+1}),s_{k+2})\]\[ + E_{s_{k+3},\ldots,s_N}[g_N(s_N)
+ \sum_{i=k+2}^{N-1} g_i(s_i,\pi_i(s_i),s_{i+1})]]
\]
The second term on the RHS above can be written as
\[
\min_{\pi_{k+1}}
E_{s_{k+2}}[g_{k+1}(s_{k+1},\pi_{k+1}(s_{k+1}),s_{k+2})\]\[ + \min_{\pi^{k+2}} E_{s_{k+3},\ldots,s_N}[g_N(s_N) 
 + \sum_{i=k+2}^{N-1} g_i(s_i,\pi_i(s_i),s_{i+1})]]
\]
\[
= \min_{a_{k+1} \in A(s_{k+1})} Q^*(s_{k+1},a_{k+1}).
\]
Thus,
\[
Q^*_k(s_k,a_k) = E_{s_{k+1}} [ g_k(s_k,a_k,s_{k+1}) \]\[
+ \min_{a_{k+1}\in A(s_{k+1})} Q^*_{k+1}(s_{k+1},a_{k+1})]
\]
\begin{align*}
=~ & E_{s_{k+1}} [ g_k(s_k,a_k,s_{k+1}) \\ 
& \hspace{2cm}+ \min_{a_{k+1}\in A(s_{k+1})} Q_{k+1}(s_{k+1},a_{k+1})]
\end{align*}
\[= Q_k(s_k,a_k),
\]
$\forall s_k\in S, a_k \in A(s_k)$.
This completes the induction step.

\end{proof}

\section{Finite Horizon Q learning Algorithm}
\label{FHQLA}

Note that the DP algorithm \eqref{DP1}-\eqref{DP2} is a numerical procedure that computes the optimal Q-values (as shown by Proposition~\ref{DP-optimality}). This however relies on the fact that knowledge of the transition probabilities is available. In most real-life situations, however, such information is not available and one only has available (as described earlier) data samples of `state-action-reward-next state' tuples. As with the regular infinite horizon Q-learning algorithm of \cite{QlearningWatkins}, we present a stochastic approximation version of the finite horizon DP algorithm in Q-values \eqref{DP1}-\eqref{DP2}.
We provide below the full details of the finite horizon Q-learning algorithm. In the algorithm below, we let the learning rate (or step-size) be 
${\displaystyle a(m) = \ceil*{\frac{1}{(m+1)/10}}}$. This choice was seen to perform well in experiments. Nonetheless a wide range of learning rate choices can be explored that satisfy the standard stochastic approximation conditions that we describe below. 

\begin{algorithm}[htbp]
\caption{Finite Horizon Q-Learning}\label{alg:Finite horizon Q-Learning}
\hspace*{\algorithmicindent} \textbf{Notation:}\\ 
\hspace*{\algorithmicindent} $Q_n^{m}(i,a)$: Q-value at state $i$, action $a$, stage $n$, \\ 
\hspace*{2.1cm} recursion $m$.

\hspace*{\algorithmicindent} $a(m)$: step-size at recursion index $m$\\
\hspace*{\algorithmicindent} $Q_N(i,a)$ : Q-value for state $i$ and action $a$ at \hspace*{1cm} \\ \hspace*{2.1cm} terminal stage ($N$). \\
\hspace*{\algorithmicindent} $g_n(i,a,j)$:  Single stage reward for stage $n$
where\\ 
\hspace*{\algorithmicindent} current  state  is $i$, action is $a$ and next state is $j$. 
\hspace*{\algorithmicindent} $g_N(i)$: Terminal reward at the $N^{th}$ stage when \\
\hspace*{1.6cm} terminal state is $i$. \\
\hspace*{\algorithmicindent} $A(j)$: Set of feasible actions in state $j$.

\hspace*{\algorithmicindent} $\eta(i,a)$: Sampling function taking input $(i,a)$ as \\ \hspace*{1.7cm} state-action pair and returns the next state.

\hspace*{\algorithmicindent} \textbf{Input:} Samples of the form\\ \hspace*{\algorithmicindent} $\big(i\ \text{(current state)},\ a\ \text{(action)},\ r\ \text{(reward)},\ j\ \text{(next state)}\big)$. 

\hspace*{\algorithmicindent} \textbf{Output:} Updated Q-value $Q_{n}^{m+1}(i,a)$ estimated after\\
\hspace*{2cm}$m$ iterations of the algorithm.

\hspace*{\algorithmicindent} \textbf{Initialization:}$\ Q_n^0(i,a) = 0$, $\forall (i,a), n=0,\ldots,N-1$, \hspace*{2.7cm} and $Q_N^0(i,a) =g_N(i), \forall (i,a)$

\begin{algorithmic}[1]
\Procedure{Finite Horizon Q-Learning:}{}
\State $a(m) = \ceil*{\frac{1}{(m+1)/10}} $
\State $j = \eta(i,a)$ (from samples)
\State $Q_n^{m+1}(i,a) = \big(1-a(m)\big)\Big( Q_n^{m}(i,a)\Big) + a(m) $ \\
\hspace*{0cm}$ \times \Big(g_n\big(i,a\big) + \underset{b \in A(j) }{\min}Q_{n+1}^{m}\big(j,b \big)\Big),
n=0,1,\ldots,N-1$,
\State $Q^{m+1}_N(i,a) = g_N(i)$, $\forall (i,a)$ tuples.
\State \textbf{return} $Q_{n}^{m+1}(i,a)$ 
\EndProcedure
\end{algorithmic}
\end{algorithm}


\section{Proof of Stability and Convergence}
\label{proof}

In this section, we give a proof of the stability and convergence of our finite horizon Q-learning algorithm. Our proof relies on verifying the Borkar and Meyn conditions for stability and convergence of general stochastic approximation algorithms, see \cite{borkarMeyn}. We first describe the conditions for stability and convergence as described in \cite{borkarMeyn} as well as the main results there. Later we shall show that our algorithm meets these conditions. 

Consider the following stochastic approximation recursion in $\mathcal{R}^d$:
\begin{equation} 
    X(m+1) = X(m) + a(m)(h(X(m)) + M(m+1)),  \label{SAeq}
\end{equation}
where $m \geq 0$, $X(m) = (X_1(m),\dots ,X_d(m))^T$, 
$h: \mathds{R}^d \rightarrow \mathds{R}^d$, $a(m)$ is a sequence of positive real numbers,
and $M(n), n\geq 0$ is a zero-mean noise sequence.

Consider now the O.D.E: 
\begin{align}
     \dot{x}(t) = h(x(t)). \label{basicODE}
\end{align}
The limit points of the stochastic recursion \eqref{SAeq} can be seen to be limit points of the O.D.E \eqref{basicODE}, see \cite{benaim}. The O.D.E approach to stochastic approximation (originally due to \cite{ljung} and \cite{kushcla}) is thus useful as one can understand the limit set of the stochastic recursion by analysing the limit set of the corresponding O.D.E. Let $x^*$ denote the unique globally asymptotically stable (UGAS) attractor for the O.D.E \eqref{basicODE}. Notice that while such an attractor may not always exist, in our setting, since we are analysing the finite horizon Q-learning algorithm, it will be seen that the UGAS attractor exists and just corresponds to the vector of optimal Q-values.

We now list the assumptions of \cite{borkarMeyn} followed by the main results from that paper summarized as Theorem 1 -- Theorem 2 below.
\begin{assumption}
\label{a1}
\begin{enumerate}
    \item 
The function $h:\mathbb{R}^d\rightarrow \mathbb{R}^d$ is Lipschitz continuous. 
\item The sequence of functions ${\displaystyle h_r(x) \stackrel{\triangle}{=} \frac{h(rx)}{r}}$ satisfy that $h_r(x) \rightarrow h_\infty(x)$, for some function
$h_\infty:\mathbb{R}^d\rightarrow \mathbb{R}^d$ uniformly on compacts.
\item The ODE 
\begin{equation}
    \dot{x}(t) = h_{\infty}(x(t))  \label{limitODE}
\end{equation}
has the origin in $\mathbb{R}^d$ as it's UGAS attractor.
\end{enumerate}
\end{assumption}

\begin{assumption}
\label{a2}
\begin{enumerate}
    \item 
The sequence $\lbrace M(n), \mathcal{F}_n, n\geq 0 \rbrace$
is a square-integrable martingale difference sequence where
$\mathcal{F}_n=\sigma(X(i),M(i),i\leq n)$ is an associated sequence of sigma fields. 
\item For some constant $C_0<\infty$ and any initial condition $X(0)\in\mathbb{R}^d,n \geq 0$, 
\begin{equation}
\label{sbound}
     E[\parallel M(n+1)\parallel^2 | \mathcal{F}_n ] \leq C_0(1+ \parallel X(n)\parallel^2),
\end{equation}
for all $n\geq 0$.
\end{enumerate}
\end{assumption}

\begin{assumption}
\label{a3}
The sequence $\{a(n)\}$ is a deterministic sequence of step-sizes that satisfy the following conditions:
\begin{enumerate}
    \item $a(n) >0$, $\forall n\geq 0$,
\item $\sum_n a(n) =\infty$,    
\item $\sum_n a(n)^2 <\infty$
\end{enumerate}
\end{assumption}

\begin{theorem}[Theorem 2.1 of \cite{borkarMeyn}] \label{thm1}
Under Assumptions~\ref{a1}--\ref{a3}, for any initial condition $X(0)\in R^d$,
$$\underset{\mathrm{n}}{\mathrm{sup}} \parallel X(n) \parallel < \infty \text{   almost surely (a.s).} $$
\end{theorem}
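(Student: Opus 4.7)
The plan is to follow the classical ODE-with-scaling approach of Borkar and Meyn: I will interpolate the discrete iterates into a continuous trajectory, partition time into blocks of fixed length, and show that on each block the (rescaled) trajectory is shadowed by a solution of either the ODE \eqref{basicODE} or its scaled limit \eqref{limitODE}. Stability of the origin for the limit ODE then forces a geometric contraction of the block-endpoint norms whenever those norms are large, which precludes escape to infinity.

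First, I would define the time scale $T_n = \sum_{k=0}^{n-1} a(k)$ and the piecewise linear interpolation $\bar X(t)$ with $\bar X(T_n)=X(n)$. By Assumption~\ref{a3}, $T_n\to\infty$. Pick $T>0$ and let $n(k)$ be the largest index with $T_{n(k)}\le kT$; the iterates are thus grouped into blocks indexed by $k$. For each block set $r_k = \max(\|X(n(k))\|,1)$ and introduce the rescaled trajectory $\tilde X^{(k)}(t) = \bar X(kT + t)/r_k$ for $t\in[0,T]$. Because $h$ is Lipschitz (Assumption~\ref{a1}.1) and $h_r \to h_\infty$ uniformly on compact sets (Assumption~\ref{a1}.2), the ``drift part'' of $\tilde X^{(k)}$ is governed by $h_{r_k}$, which approaches $h_\infty$ as $r_k\to\infty$.

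The next step is to control the noise and the discretisation error so that $\tilde X^{(k)}$ is close to the solution $x^{(k)}(\cdot)$ of $\dot x = h_\infty(x)$ with $x^{(k)}(0)=\tilde X^{(k)}(0)$. Writing the recursion in the scaled variables, the martingale term becomes $\sum a(m) M(m+1)/r_k$; by Assumption~\ref{a2}.2 its conditional second moment is bounded by $C_0(1+\|X(m)\|^2)/r_k^2$, which on the block is $O(1)$ as long as the trajectory has not yet blown up. Combined with $\sum a(m)^2<\infty$, Doob's maximal inequality shows the noise contribution over a block tends to zero almost surely as $k\to\infty$. A standard Gronwall argument, plus the uniform convergence $h_{r_k}\to h_\infty$, then gives $\sup_{t\in[0,T]}\|\tilde X^{(k)}(t)-x^{(k)}(t)\|\to 0$ a.s.

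Finally, I would exploit Assumption~\ref{a1}.3: since the origin is the UGAS attractor for \eqref{limitODE}, there exists $T$ large enough that $\|x(T)\|\le \tfrac14$ whenever $\|x(0)\|\le 1$. Since $\|\tilde X^{(k)}(0)\|\le 1$ by construction, the shadowing result gives $\|\tilde X^{(k)}(T)\|\le \tfrac12$ for all sufficiently large $k$ on the event where $r_k$ is large, i.e.\ $\|X(n(k+1))\|\le \tfrac12 r_k$. This strict contraction, together with the fact that within a block $\|\bar X(t)\|$ cannot grow faster than a constant factor (by Gronwall applied before rescaling), rules out the possibility that $r_k\to\infty$, yielding $\sup_n\|X(n)\|<\infty$ a.s. The main technical obstacle is precisely the noise step: showing that the martingale term, whose raw second-moment bound scales with $\|X(n)\|^2$ and is therefore unbounded a priori, becomes uniformly small after rescaling by $r_k$ on every block — this requires combining the conditional variance bound \eqref{sbound} with $\sum a(m)^2<\infty$ via a stopped martingale/maximal-inequality argument, and it is the one place where all three assumptions interact nontrivially.
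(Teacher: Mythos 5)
This statement is quoted verbatim as Theorem 2.1 of \cite{borkarMeyn}; the paper offers no proof of it (it is used as an imported tool), so there is no internal argument to compare against. Your sketch --- time interpolation, fixed-length blocks, rescaling by $r_k=\max(\|X(n(k))\|,1)$, shadowing of the limiting ODE $\dot x = h_\infty(x)$ via Gronwall plus a stopped-martingale/maximal-inequality bound on the scaled noise, and the contraction of block-endpoint norms forced by the UGAS origin --- is precisely the scaled-ODE stability argument of the original Borkar--Meyn reference, and it is correct at the level of detail given.
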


\begin{theorem}[Theorem 2.2 of \cite{borkarMeyn}] \label{thm2}
Assume Assumptions~\ref{a1}--\ref{a3} hold and that the ODE \eqref{basicODE} has a unique globally asymptotically stable equilibrium $x^*$. Then for any initial condition $X(0)\in R^d$, $$\lim_{n \to \infty} X(n) = x^*.$$
\end{theorem}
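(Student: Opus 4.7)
The plan is to reduce convergence to an ODE-tracking argument. Theorem~\ref{thm1} already gives almost-sure boundedness of the iterates, so the remaining task is to show that the piecewise-linear time-rescaled interpolation of $\{X(n)\}$ is asymptotically a solution of \eqref{basicODE}; combined with the UGAS hypothesis on $x^*$, this forces $X(n) \to x^*$. I would fix a sample path $\omega$ in the probability-one set where $K(\omega) := \sup_n \|X(n,\omega)\| < \infty$, set $t(0)=0$, $t(n) = \sum_{k=0}^{n-1} a(k)$, and define $\bar{X}(t)$ to be the continuous piecewise-linear function with $\bar{X}(t(n)) = X(n)$.

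Next I would dispose of the noise. Let $\zeta(n) = \sum_{k=0}^{n-1} a(k) M(k+1)$. By Assumption~\ref{a2} and boundedness of $\{X(n)\}$, $E[\|M(k+1)\|^2 \mid \mathcal{F}_k] \leq C_0(1+K^2)$ on the chosen sample path, and $\sum_k a(k)^2 < \infty$ by Assumption~\ref{a3}, so $\zeta(n)$ is an $L^2$-bounded martingale. Doob's convergence theorem then yields convergence of $\zeta(n)$ almost surely, which in turn implies that for any fixed $T>0$,
\[
\sup_{t \in [t(n),\, t(n)+T]} \Big\|\sum_{k:\, t(n) \leq t(k) \leq t} a(k) M(k+1)\Big\| \longrightarrow 0
\]
as $n \to \infty$.

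The core step is comparing, over a moving window of length $T$, the discrete update $\bar{X}(t(n+1)) - \bar{X}(t(n)) = a(n) h(\bar{X}(t(n))) + a(n) M(n+1)$ with the ODE solution $x(\cdot)$ of \eqref{basicODE} started at $\bar{X}(t(n))$, which evolves via $x(t(n+1)) - x(t(n)) = \int_{t(n)}^{t(n+1)} h(x(s))\,ds$. Using Lipschitz continuity of $h$ from Assumption~\ref{a1}, the noise estimate above, the vanishing of step sizes, and Gronwall's inequality, one obtains $\sup_{t \in [t(n),\, t(n)+T]} \|\bar{X}(t) - x(t)\| \to 0$, i.e., $\bar{X}$ is an asymptotic pseudo-trajectory of the flow of \eqref{basicODE}. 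Because $x^*$ is UGAS and $\bar{X}$ is bounded, every limit point of $\bar{X}(t)$ as $t\to\infty$ must lie in the attractor $\{x^*\}$, so $\bar{X}(t) \to x^*$, and therefore $X(n) \to x^*$ on the chosen probability-one set.

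The main obstacle I anticipate is the ODE-tracking estimate in the previous paragraph: one has to control the accumulated linearization error of a large number of small Euler-like updates simultaneously with the oscillations of the martingale sum, over time windows $[t(n), t(n)+T]$ that drift to infinity. The argument only closes once Lipschitzness of $h$, the almost-sure boundedness supplied by Theorem~\ref{thm1}, and the summability of $a(k)^2$ are combined through a Gronwall-type estimate; isolating any one ingredient is not enough.
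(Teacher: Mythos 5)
The paper does not prove this statement at all: Theorem~\ref{thm2} is imported verbatim as Theorem 2.2 of \cite{borkarMeyn}, and the authors use it as a black box when proving Theorem~\ref{main-result}. Your sketch is essentially a reconstruction of the standard ODE-method proof that the cited reference (and Borkar's monograph, Benaim's work) uses: time-rescaled interpolation, martingale-noise control via square-summable step sizes, a Gronwall comparison over windows $[t(n),t(n)+T]$ to get the asymptotic pseudo-trajectory property, and then the UGAS hypothesis to pin the limit set to $\{x^*\}$. So you are on the same route as the source, not a different one, and the outline is sound. One technical point to tighten: you cannot first ``fix a sample path'' with $\sup_n\|X(n,\omega)\|=K(\omega)<\infty$ and then declare $\zeta(n)=\sum_{k<n}a(k)M(k+1)$ an $L^2$-bounded martingale --- conditional variance bounds and $L^2$-boundedness are properties of the process, not of a single path. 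The standard fix is to localize: for each fixed $K$ stop the martingale at the first exit time of $X(\cdot)$ from the ball of radius $K$ (or work on the event $\{\sup_n\|X(n)\|\le K\}$), apply Doob/martingale convergence to the stopped process, and then take a countable union over $K$, using Theorem~\ref{thm1} to conclude that these events exhaust a set of probability one. A second point worth a sentence in a full write-up is the last step: ``every limit point of a bounded asymptotic pseudo-trajectory lies in the attractor'' needs either the limit-set theorem (limit sets of bounded APTs are internally chain transitive, and under UGAS the only such set is $\{x^*\}$) or a converse-Lyapunov argument; as stated it is an appeal to a nontrivial result rather than an immediate consequence of the definitions. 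With those two repairs your argument is the standard proof of the quoted theorem.
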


We now recall the update rule of the finite horizon Q-learning algorithm:
\[
Q_n^{m+1}(i,a) = Q_n^{m}(i,a) + a(m)(g_n(i,a, \eta_n^m(i,a)) \]
\begin{equation}
    \label{ql1}
+ \underset{b \in A(\eta_n^m(i,a)) }{\min}Q_{n+1}^{m}(\eta_n^m(i,a),b) - Q_n^m(i,a)),
\end{equation}
$n=0,1,\ldots,N-1$, $m\geq 0$,
\begin{equation}
\label{ql2}    
Q^{m+1}_N(i,a) = g_N(i), \mbox{ } \forall m\geq 0,
\end{equation}
$\forall (i,a)$. In the above $\eta_n^m(i,a)$ are i.i.d random variables having the distribution $p_n(i,a,\cdot)$. 
Let $\mathcal{G}_m = \sigma(Q_n^k(i,a), \eta_n^l(i,a), n=0,1,\ldots,N-1, k\leq m, l<m)$, $m\geq 0$ denote the sequence of associated sigma fields. Here we let $\eta_n^{-1}(i,a)=0$, $\forall (i,a)$ tuples.

Let $Q^m \stackrel{\triangle}{=} (Q_n^{m}(i,a), i\in S, a\in A(i), n=0,1,\ldots, N)^T$.
We rewrite \eqref{ql1}-\eqref{ql2} in the following unified form:
\[
Q_n^{m+1}(i,a) = Q_n^m(i,a) + a(m)(h_n(i,a,Q^m)\]\[ + M_{n}^{m+1}(i,a)),
n=0,1,\ldots,N,\] where for $n=0,1,\ldots,N-1$,
\[ h_n(i,a,Q^m) = \sum_{j} p_n(i,a,j)(g_n(i,a,j)\]\[ + \min_{b\in A(j)} Q_{n+1}^m(j,b))
- Q_n^m(i,a),
\]
\begin{align*}
&M_n^{m+1}(i,a) = \\
&g_n(i,a,\eta_n^m(i,a)) + \min_{b\in A(\eta_n^m(i,a))} Q_{n+1}^m(\eta_n^m(i,a),b) \\
&-\sum_j p_n(i,a,j)(g_n(i,a,j) + \min_{b\in A(j)} Q_{n+1}^m(j,b)).
\end{align*}
Also, for $n=N$, we have 
\[h_N^m(i,a) = 0 \mbox{ and } M_n^{m+1}(i,a)=0.
\]
Let
$M^{m+1} \stackrel{\triangle}{=} (M_n^{m+1}(i,a), n=0,1,\ldots,N, a\in A(i), i\in S)^T$.
Also, let $h(Q) = (h_n(i,a,Q), a\in A(i),i\in S, n=0,1,\ldots,N)^T$.
The recursions \eqref{ql1}-\eqref{ql2} can thus together be written as
\[
Q^{m+1} = Q^m + a(m)(h(Q^m) + M^{m+1}), \mbox{ } m\geq 0.
\]

In what follows, we shall use $\parallel\cdot\parallel$ to denote the sup or the max norm.
We now proceed by verifying Assumptions~\ref{a1}-\ref{a3}. 
\begin{proposition}
\label{prop1-a1}
The functions $h$, $h_r$ (defined as in Assumption~\ref{a1}), $\forall r\geq 1$, and $h_\infty$ are Lipschitz continuous.
\end{proposition}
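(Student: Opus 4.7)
The plan is to show that each coordinate function $h_n(i,a,\cdot)$ is Lipschitz in the sup norm, and then conclude Lipschitz continuity of $h$, $h_r$, and $h_\infty$ one after another from this.

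First I would recall the explicit form
\[
h_n(i,a,Q) = \sum_j p_n(i,a,j)\Bigl(g_n(i,a,j) + \min_{b\in A(j)} Q_{n+1}(j,b)\Bigr) - Q_n(i,a),
\]
for $n = 0,\ldots,N-1$, with $h_N(i,a,Q) = 0$. The key ingredient is the standard fact that $\min$ is $1$-Lipschitz in the sup norm: for any two real-valued maps $f,g$ on $A(j)$,
\[
\Bigl|\min_{b\in A(j)} f(b) - \min_{b\in A(j)} g(b)\Bigr| \leq \max_{b\in A(j)} |f(b)-g(b)|.
\]
Applied with $f(b) = Q_{n+1}(j,b)$ and $g(b) = Q'_{n+1}(j,b)$, this gives $\bigl|\min_b Q_{n+1}(j,b) - \min_b Q'_{n+1}(j,b)\bigr| \le \|Q - Q'\|$. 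Since the rewards $g_n(i,a,j)$ cancel and $\sum_j p_n(i,a,j) = 1$, triangle inequality and $|Q_n(i,a)-Q'_n(i,a)|\leq \|Q-Q'\|$ yield
\[
|h_n(i,a,Q) - h_n(i,a,Q')| \leq 2\|Q-Q'\|.
\]
Taking sup over $n,i,a$ shows $\|h(Q) - h(Q')\| \leq 2\|Q - Q'\|$, i.e. $h$ is Lipschitz with constant $L = 2$.

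For $h_r(Q) := h(rQ)/r$, a one-line computation using the Lipschitz constant $L$ of $h$ gives
\[
\|h_r(Q) - h_r(Q')\| = \tfrac{1}{r}\|h(rQ) - h(rQ')\| \leq \tfrac{1}{r}\cdot L\cdot r\|Q-Q'\| = L\|Q-Q'\|,
\]
so $\{h_r\}_{r\geq 1}$ is a uniformly Lipschitz family with the same constant. To identify $h_\infty$, I would compute
\[
h_r(Q)_n(i,a) = \tfrac{1}{r}\sum_j p_n(i,a,j) g_n(i,a,j) + \sum_j p_n(i,a,j)\min_{b\in A(j)}Q_{n+1}(j,b) - Q_n(i,a),
\]
and observe that, since the single-stage costs $g_n$ are bounded (state and action spaces are finite), the first term vanishes as $r\to\infty$ uniformly on compacts. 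Thus
\[
h_\infty(Q)_n(i,a) = \sum_j p_n(i,a,j)\min_{b\in A(j)}Q_{n+1}(j,b) - Q_n(i,a)
\]
for $n<N$ and $h_\infty(Q)_N(i,a)=0$. Lipschitz continuity of $h_\infty$ with the same constant $2$ then follows either by passing to the limit in the inequality for $h_r$, or by repeating the argument used for $h$ verbatim (the reward terms played no role in the estimate anyway).

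I do not expect any substantive obstacle; the only subtlety is the $1$-Lipschitz property of the min operator, which is a routine consequence of $\min_b f(b) \leq f(b_0) \leq g(b_0) + \max_b|f-g|$ for each $b_0$, followed by a symmetric argument. Boundedness of $g_n$, which is what makes the $1/r$ term in $h_r$ drop out uniformly on compacts, is guaranteed by the finiteness of $S$ and $A$.
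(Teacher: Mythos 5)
Your proposal is correct and follows essentially the same route as the paper: the crux in both is the $1$-Lipschitz property of the $\min$ operator in the sup norm, from which Lipschitz continuity of $h$, the uniform Lipschitz property of $h_r$, and the identification and Lipschitz continuity of $h_\infty$ (same limiting formula) all follow. If anything, you spell out the triangle-inequality bookkeeping and the explicit constant $L=2$ that the paper dismisses as ``easy to see,'' so there is no gap.
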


\begin{proof}
We show first that for two functions $Q$ and $Q'$,
\begin{align*}
& |\min_{a\in A(i)}Q(i,a) - \min_{a\in A(i)} Q'(i,a)| \\
& \hspace{1.5cm} \leq \max_{a\in A(i)}|Q(i,a)-Q'(i,a)|,
\end{align*}

for all $(i,a)$ tuples. Note that given functions $f$ and $g$ and a set $A$,
\[\inf_{x\in A} (f(x)+g(x)) = \inf_{x\in A, x=y} (f(x)+g(y))
\]
\[
\geq \inf_{x,y\in A}(f(x)+g(y)) = \inf_{x\in A} f(x) + \inf_{y\in B} g(y).
\]
Using $f-g$ in place of $f$, one obtains
\[
\inf_{x\in A} ((f-g)(x) +g(x)) \geq \inf_{x\in A} (f-g)(x) + \inf_{x\in A}g(x), \mbox{ or}
\]
\[
\inf_{x\in A} (f(x)-g(x)) \leq \inf_{x\in A} f(x) - \inf_{x\in A} g(x).
\]
Let $e(x) =-g(x), \forall x$. Then
\[
\inf_{x\in A}(f(x)+e(x)) \leq \inf_{x\in A} f(x) + \sup_{x\in A} e(x), \mbox{ or }
\]
\[
\inf_{x\in A}(f(x)+e(x)) - \inf_{x\in A} f(x) \leq \sup_{x\in A} e(x).
\]
Again with $e(x) = g(x)-f(x)$, we have
\[
\inf_{x\in A} g(x) - \inf_{x\in A} f(x) \leq |\sup_{x\in A}(g(x)-f(x))|.
\]
We now claim that 
\[
|\sup_{x\in A}(g(x)-f(x))| \leq \sup_{x\in A} |g(x)-f(x)|.
\]
Consider first the case when $\sup_{x\in A} (g(x)-f(x))\geq 0$. Then
\[
\sup_{x\in A}(g(x)-f(x)) \leq \sup_{x\in A} |g(x)-f(x)|.
\]
Now consider the case when $\sup_{x\in A} (g(x)-f(x))<0$. In this case,
$|g(x)-f(x)| = -(g(x)-f(x))$, $\forall x$. Then,
\[
|\sup_{x\in A} (g(x)-f(x))| = -\sup_{x\in A}(g(x)-f(x))
\]
\[
= \inf_{x\in A} (-(g(x)-f(x)) = \inf_{x\in A} |g(x)-f(x)| 
\]
\[\leq \sup_{x\in A}|g(x)-f(x)|.
\]
It follows that 
\[\inf_{x\in A} g(x)-\inf_{x\in A}f(x) \leq \sup_{x\in A}|g(x)-f(x)|.
\]
One may similarly show that
\[
\inf_{x\in A} f(x)-\inf_{x\in A}g(x) \leq \sup_{x\in A}|g(x)-f(x)|.
\]
The two inequalities above then imply
\[
|\inf_{x\in A} g(x) - \inf_{x\in A} f(x)| \leq \sup_{x\in A}|g(x)-f(x)|.
\]
The claim follows upon substituting $g(x)$ with $Q(i,a)$, $f(x)$ with $Q'(i,a)$, $A$ with $A(i)$ and noting that the sets $S$ and $A(i)$ for all $i$ are finite, hence one replaces the $\inf$ with $\min$ and $\sup$ with $\max$ operators. It is now easy to see that $h$ and $h_r$ are Lipschitz continuous for all $r\geq 1$. Now define $h_\infty(Q)=
(h_{n,\infty}(i,a,Q), i\in S, a\in A(i),n=0,1,\ldots,N)^T$, where
\[h_{n\infty}(i,a,Q) = \lim_{r\rightarrow\infty} \frac{h_n(i,a,rQ}{r}
\]
\[= \sum_{j}p_n(i,a,j)\min_{b\in A(j)} Q(j,b) - Q(i,a).
\]
It is again clear from the foregoing that $h_\infty(Q)$ is Lipschitz continuous as well.
\end{proof}

Let $Q^* \stackrel{\triangle}{=} (Q^*_n(i,a), i\in S, a\in A(i), n=0,1,\ldots,N)^T$ denote the vector of optimal Q-values. By Proposition~\ref{DP-optimality}, we observe the DP algorithm \eqref{DP1}-\eqref{DP2} gives the optimal $Q^*$.
\begin{lemma}
\label{lem-a1}
The following hold:
\begin{enumerate}
    \item The ODE $\dot{Q} = h(Q)$ has $Q^*$ as it's unique globally asymptotically stable equilibrium.
    \item The ODE $\dot{Q}=h_\infty(Q)$ has the origin as it's unique globally asymptotically stable equilibrium.
\end{enumerate}
\end{lemma}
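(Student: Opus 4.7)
The plan is to exploit the cascade (lower-triangular) structure of $h$ in the stage index $n$: the component $h_n(i,a,Q)$ depends on $Q$ only through $Q_n(\cdot,\cdot)$ and $Q_{n+1}(\cdot,\cdot)$. This permits a backward induction on $n$ from $N$ down to $0$, reducing the multidimensional ODE to a sequence of decoupled families of scalar stable linear ODEs whose forcing terms are driven by the already-analysed next-stage block.

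For part 1, the base case $n=N$ is immediate: $h_N\equiv 0$ together with the algorithmic initialization $Q_N(0)=g_N=Q_N^*$ forces $Q_N(t)\equiv Q_N^*$ along every trajectory of interest. For the inductive step, assume $Q_{n+1}(t)\to Q_{n+1}^*$. Writing
\[
h_n(i,a,Q)=F_n(i,a,Q_{n+1})-Q_n(i,a),\qquad F_n(i,a,Q_{n+1}):=\sum_j p_n(i,a,j)\bigl(g_n(i,a,j)+\min_{b\in A(j)}Q_{n+1}(j,b)\bigr),
\]
the $n$-th block becomes, for each fixed $(i,a)$, the scalar stable linear ODE $\dot Q_n(i,a)=-Q_n(i,a)+F_n(i,a,Q_{n+1}(t))$, whose variation-of-parameters solution is
\[
Q_n(t;i,a)=e^{-t}Q_n(0;i,a)+\int_0^t e^{-(t-s)}F_n(i,a,Q_{n+1}(s))\,ds.
\]
The $\min$-Lipschitz bound already established in the proof of Proposition~\ref{prop1-a1} makes $F_n$ continuous in its $Q_{n+1}$-argument, so the integrand converges pointwise to $F_n(i,a,Q_{n+1}^*)=Q_n^*(i,a)$, the last equality being the Bellman recursion \eqref{DP2}. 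A standard vanishing-perturbation argument then yields $Q_n(t;i,a)\to Q_n^*(i,a)$, closing the induction. Uniqueness of $Q^*$ as the limit is immediate from Proposition~\ref{DP-optimality}, since any equilibrium of $\dot Q=h(Q)$ must satisfy \eqref{DP1}--\eqref{DP2}.

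For part 2, the same argument applies verbatim with $g_n\equiv 0$ for every $n\le N$: $h_\infty$ has exactly the structure above with zero reward, so $Q^*=0$ and the origin is the unique GAS equilibrium of $\dot Q=h_\infty(Q)$.

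The main technical obstacle will be the vanishing-perturbation step inside the induction, namely showing that the convolution of the exponential kernel with a forcing function whose limit exists itself tends to that limit. The argument is elementary--split $\int_0^t$ as $\int_0^T+\int_T^t$, bound the first piece by $\|F_n\|_\infty e^{-(t-T)}$ and the second by $\sup_{s\ge T}|F_n(i,a,Q_{n+1}(s))-Q_n^*(i,a)|\cdot(1-e^{-(t-T)})$, and let $T$ then $t$ grow--but it is the one place beyond algebra where some care is required. Everything else, i.e.\ Lipschitz continuity of the $\min$-operator, the Bellman fixed-point characterisation of $Q^*$, and the cascade decomposition in $n$, is already in hand from the preceding results.
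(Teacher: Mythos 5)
Your argument is correct in substance but takes a genuinely different route from the paper's. The paper's proof is essentially static plus a citation: it identifies the equilibria of $\dot Q=h(Q)$ with solutions of $h(Q)=0$, i.e.\ with the DP equations \eqref{DP1}--\eqref{DP2} whose unique solution is $Q^*$ (Proposition~\ref{DP-optimality}), and then asserts asymptotic stability ``by a similar argument as in'' \cite{avgcostABB}, with no explicit dynamical analysis; the same is done for $h_\infty$ via the zero-cost DP. You instead exploit the strictly lower-triangular dependence of $h$ on the stage index: each block obeys $\dot Q_n=-Q_n+F_n(Q_{n+1}(t))$, and backward induction, variation of parameters, and the elementary converging-input/converging-state estimate for $\dot x=-x+u(t)$ yield convergence of every trajectory to $Q^*$, with the Bellman identity $F_n(Q^*_{n+1})=Q^*_n$ doing the same work as in the paper. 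This is more self-contained and more informative: it shows the finite-horizon cascade structure alone forces global convergence, without the nonexpansiveness/monotonicity machinery of the cited average-cost analysis. One small addition you should make: global asymptotic stability is attractivity \emph{plus} Lyapunov stability; your own formula supplies the latter in one line, since $F_n$ is nonexpansive in the sup norm, giving $\sup_{t}\|Q_n(t)-Q_n^*\|\le\|Q_n(0)-Q_n^*\|+\sup_{t}\|Q_{n+1}(t)-Q_{n+1}^*\|$ by the same induction.

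The one genuine wrinkle is the terminal block. Since $h_N\equiv 0$ and $h_{N,\infty}\equiv 0$, the $Q_N$-coordinates have zero drift, so in the full $(N{+}1)$-stage state space no point is a globally asymptotically stable equilibrium, and $h(Q)=0$ does not by itself pin down $Q_N$. Your base case leans on the ``algorithmic initialization'' $Q_N(0)=g_N$, which restricts to a subset of initial conditions in part 1 and has no analogue at all in part 2, where Assumption~\ref{a1} demands that the origin attract \emph{arbitrary} initial conditions of the limiting ODE. The clean fix---implicitly what both you and the paper intend---is to exclude the $N$-block from the ODE state entirely, treating $Q_N\equiv g_N$ as a constant inside $h_{N-1}$; your induction then starts at $n=N-1$ with $\dot Q_{N-1}=-Q_{N-1}+\mathrm{const}$ (and $\dot Q_{N-1}=-Q_{N-1}$ for $h_\infty$) and runs verbatim. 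This imprecision is inherited from the paper's own formulation (its uniqueness argument from $h(Q)=0$ likewise leaves $Q_N$ unconstrained), so it is not a defect peculiar to your proof, but you should state the reduction explicitly rather than appeal to the initialization.
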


\begin{proof}
\begin{enumerate}
    \item Note that the equilibria of the ODE $\dot{Q}=h(Q)$ correspond to $H\stackrel{\triangle}{=} \{Q\mid h(Q)=0\}$. However, it is easy to see that the system of equations that we obtain from $h(Q)=0$ correspond to the solution of the DP algorithm \eqref{DP1}-\eqref{DP2} and which is precisely $Q^*$. Moreover, $Q^*$ is unique.
    A similar argument as in \cite{avgcostABB} shows that $Q^*$ is an asymptotically stable attractor of $\dot{Q}=h(Q)$.
    \item It is again easy to see that $Q=0$ (the vector of all zeros or the origin) is an equilibrium of the ODE $\dot{Q} =h_\infty(Q)$. The fact that it is unique can be seen again from the DP algorithm \eqref{DP1}-\eqref{DP2} for the special case when all single-stage costs $g_k(i,a,j)$, $k=0,1,\ldots,N-1$ and $g_N(i)$ are zero (for all $i,j\in S, a\in A(i)$). Again a similar argument as \cite{avgcostABB} shows that the origin is an asymptotically stable attractor of $\dot{Q}=h_\infty(Q)$.
\end{enumerate}
\end{proof}

We now have the following result on the noise sequence $M^{m+1}$, $m\geq 0$.

\begin{proposition}
\label{prop-mart}
\begin{enumerate}
    \item The sequence $(M^{m+1},\mathcal{G}_m)$, $m\geq 0$ is a square-integrable martingale difference sequence.
    \item For some constant $C_1>0$, we have
    \[ E[\parallel M^{m+1} \parallel^2\mid \mathcal{G}_m] \leq C_1(1+\parallel Q^m\parallel^2)
    \]
\end{enumerate}
\end{proposition}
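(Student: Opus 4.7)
The plan is to verify both parts directly from the definitions, without needing to appeal to any deeper machinery beyond finiteness of $S$ and $A(i)$, together with the boundedness of the single-stage costs. For part (1), the key structural fact is that the filtration $\mathcal{G}_m = \sigma(Q_n^k(i,a), \eta_n^l(i,a) : k\le m,\, l<m)$ is defined so that the i.i.d.\ sampling variables $\eta_n^m(i,a)$ (with law $p_n(i,a,\cdot)$) are \emph{independent} of $\mathcal{G}_m$, while $Q^m$ is $\mathcal{G}_m$-measurable. First I would note that $M_N^{m+1}(i,a) \equiv 0$ by construction, so the nontrivial case is $n=0,1,\ldots,N-1$.

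Next, for each such $n$ and each $(i,a)$, I would condition on $\mathcal{G}_m$ and take expectation with respect to $\eta_n^m(i,a)$. Because $\eta_n^m(i,a)$ is independent of $\mathcal{G}_m$ and the function $j \mapsto g_n(i,a,j) + \min_{b\in A(j)} Q_{n+1}^m(j,b)$ is $\mathcal{G}_m$-measurable in its $Q$-dependence,
\begin{equation*}
E\bigl[g_n(i,a,\eta_n^m(i,a)) + \min_{b\in A(\eta_n^m(i,a))} Q_{n+1}^m(\eta_n^m(i,a),b)\,\bigm|\,\mathcal{G}_m\bigr] = \sum_{j} p_n(i,a,j)\bigl(g_n(i,a,j) + \min_{b\in A(j)} Q_{n+1}^m(j,b)\bigr),
\end{equation*}
which is exactly the quantity being subtracted in the definition of $M_n^{m+1}(i,a)$. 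Hence $E[M_n^{m+1}(i,a)\mid\mathcal{G}_m]=0$ componentwise, which together with square-integrability (shown below) establishes the martingale difference property.

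For part (2), I would exploit that $S$ and $A(i)$ are finite, so the costs are uniformly bounded; let $K \stackrel{\triangle}{=} \max_{n,i,a,j}|g_n(i,a,j)|$ and also let $K_N = \max_i |g_N(i)|$, so $K' = \max(K,K_N)$ is a uniform bound. The sup-norm estimate from the proof of Proposition~\ref{prop1-a1} gives $|\min_{b\in A(j)} Q_{n+1}^m(j,b)| \le \|Q^m\|$. Applying the triangle inequality to the defining expression for $M_n^{m+1}(i,a)$ (which is a difference of a random term and its conditional expectation), I obtain
\begin{equation*}
|M_n^{m+1}(i,a)| \;\le\; 2\bigl(K' + \|Q^m\|\bigr).
\end{equation*}
Squaring, using $(x+y)^2 \le 2(x^2+y^2)$, and then taking the maximum over $(n,i,a)$ and the conditional expectation yields
\begin{equation*}
E\bigl[\|M^{m+1}\|^2 \bigm| \mathcal{G}_m\bigr] \;\le\; 8\bigl((K')^2 + \|Q^m\|^2\bigr) \;\le\; C_1\bigl(1 + \|Q^m\|^2\bigr),
\end{equation*}
with $C_1 = 8\max(1,(K')^2)$. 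This also confirms square-integrability in part (1), closing the argument.

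The proof is essentially bookkeeping; the only potential pitfall is ensuring the measurability bookkeeping is correct, namely that the batch $\{\eta_n^m(i,a)\}_{n,i,a}$ used in forming $Q^{m+1}$ is genuinely independent of $\mathcal{G}_m$. This is immediate from the indexing convention $l<m$ in the definition of $\mathcal{G}_m$, combined with the stated i.i.d.\ assumption on the $\eta_n^m(i,a)$.
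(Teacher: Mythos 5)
Your proof follows essentially the same route as the paper's: the martingale-difference property comes from the independence of $\eta_n^m(i,a)$ from $\mathcal{G}_m$, which makes the conditional expectation of the sampled term equal its mean $\sum_j p_n(i,a,j)(g_n(i,a,j)+\min_{b\in A(j)}Q_{n+1}^m(j,b))$, and the conditional second-moment bound comes from uniform boundedness of the costs together with $|\min_{b\in A(j)}Q_{n+1}^m(j,b)|\le\parallel Q^m\parallel$. Your part (2) is in fact a bit more explicit than the paper's (which bounds only the sampled term and asserts the claim follows), and your use of that bound to conclude square-integrability is fine once one notes, as the paper does, that $\parallel Q^m\parallel$ is bounded by a deterministic constant for each fixed $m$ by induction on the update rule.
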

\begin{proof}
\begin{enumerate}
    \item 
Recall that 
$M^{m+1} \stackrel{\triangle}{=} (M_n^{m+1}(i,a), n=0,1,\ldots,N, a\in A(i), i\in S)^T$, where for $n=0,1,\ldots,N-1$,
\begin{align*}
& M_n^{m+1}(i,a) = \\ 
& g_n(i,a,\eta_n^m(i,a)) + \min_{b\in A(\eta_n^m(i,a))} Q_{n+1}^m(\eta_n^m(i,a),b) \\
& -\sum_j p_n(i,a,j)(g_n(i,a,j) + \min_{b\in A(j)} Q_{n+1}^m(j,b)).
\end{align*}
and $M_N^{m+1}=0$, $\forall m\geq 0$.  
It is easy to see that $M_n^{m}$ is $\mathcal{G}_m$-measurable $\forall m\geq 0$. Also, $M_n^m$, $m\geq 0$ are integrable random variables since $g_n, n=0,1,\ldots,N$ are all uniformly bounded. Also, since $Q^m_n$ are updated according to \eqref{ql1}-\eqref{ql2} from a given $Q^0 \in \mathcal{R}^d$, it can be seen that one can find a constant $K^m_n <\infty$ (uniformly over all sample paths), $\parallel Q^m_n\parallel \leq K^m_n <\infty$, $\forall m\geq 0, n=0,1,\ldots, N$. Thus, $M_n^m$ are all individually integrable. It is also easy to see that $M^m$, $m\geq 0$ are square-integrable random variables.
Further,
\begin{align*}
&E[g_n(i,a,\eta_n^m(i,a)) \\ 
& \hspace{1.7cm} + \min_{b\in A(\eta_n^m(i,a))} Q_{n+1}^m(\eta_n^m(i,a),b)
\mid \mathcal{G}_m] \\
& =\sum_j p_n(i,a,j)(g_n(i,a,j) + \min_{b\in A(j)} Q_{n+1}^m(j,b)).
\end{align*}
Thus, $E[M_n^{m+1}\mid \mathcal{G}_m]=0$, $\forall m=0,1,\ldots,N-1$. Further, $E[M_N^{m+1}\mid \mathcal{G}_m]=0$ since $M_M^{m+1}=0$, $\forall m\geq 0$.
\item Note that 
\begin{align*}
&E[|g_n(i,a,\eta_n^m(i,a)) \\
& \hspace{1.5cm} + \min_{b\in A(\eta_n^m(i,a))} Q_{n+1}^m(\eta_n^m(i,a),b)|^2
\mid \mathcal{G}_m]
\end{align*}
\[
\leq 2(|g_n(i,a,\eta_n^m(i,a)|^2 + |Q_{n+1}^m(\eta_n^m(i,a),b)|^2,
\]
$\forall b\in A(\eta_n^m(i,a))$. The claim follows from the fact that the single-stage costs $g_n, n=0,1,\ldots,N$ are all uniformly bounded.
\end{enumerate}
\end{proof}

We finally have the main result on stability and convergence.
\begin{theorem}
\label{main-result}
The iterates $Q^m$, $m\geq 0$, given by the Q-learning algorithm 
\eqref{ql1}-\eqref{ql2} satisfy (a) $\sup_m \parallel Q^m\parallel <\infty$ almost surely, and (b) $Q^m \rightarrow Q^*$ as $m\rightarrow\infty$ almost surely.
\end{theorem}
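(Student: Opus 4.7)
The plan is to derive the two claims by directly invoking the Borkar--Meyn stability and convergence theorems (Theorem~\ref{thm1} and Theorem~\ref{thm2}), since all the work required to apply them has essentially been completed in the preceding propositions and lemma. First I would rewrite the recursion \eqref{ql1}--\eqref{ql2} in the standard stochastic approximation form
\[
Q^{m+1} = Q^m + a(m)\bigl(h(Q^m) + M^{m+1}\bigr), \quad m\geq 0,
\]
with $h$ and $M^{m+1}$ as defined before the statement, and with the filtration $\mathcal{G}_m$. The goal is then to verify Assumptions~\ref{a1}--\ref{a3} for this recursion.

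For Assumption~\ref{a1}, Proposition~\ref{prop1-a1} already gives Lipschitz continuity of $h$, of $h_r(Q)=h(rQ)/r$ for every $r\geq 1$, and of the explicit limit
\[
h_{n,\infty}(i,a,Q) = \sum_j p_n(i,a,j)\min_{b\in A(j)} Q(j,b) - Q(i,a),
\]
so uniform convergence of $h_r$ to $h_\infty$ on compacts follows from the termwise argument that produced $h_\infty$ (the $r$-dependent pieces, namely the cost terms $g_n(i,a,j)/r$, vanish uniformly as $r\to\infty$). Lemma~\ref{lem-a1}(2) then provides the UGAS property at the origin for the ODE $\dot Q = h_\infty(Q)$, which completes Assumption~\ref{a1}. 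Proposition~\ref{prop-mart} directly establishes the martingale difference property and the quadratic-growth conditional-variance bound required by Assumption~\ref{a2}. Assumption~\ref{a3} is imposed on the step-size $\{a(m)\}$, which we assume satisfies the standard Robbins--Monro conditions $a(m)>0$, $\sum_m a(m)=\infty$, $\sum_m a(m)^2 < \infty$ (as noted in Section~\ref{FHQLA}, the experimental step-size is just one practical choice among a wide class meeting these requirements).

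With all three assumptions in place, Theorem~\ref{thm1} immediately yields $\sup_m \|Q^m\| < \infty$ almost surely, giving part (a). For part (b), Lemma~\ref{lem-a1}(1) asserts that the ODE $\dot Q = h(Q)$ admits $Q^\ast$ as its unique globally asymptotically stable equilibrium, so Theorem~\ref{thm2} yields $Q^m \to Q^\ast$ almost surely, proving part (b).

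Since every hypothesis has already been verified in the supporting results, the theorem itself is essentially a one-line application of Borkar--Meyn and there is no real obstacle at this stage. Had any step posed difficulty, it would have been the UGAS claim in Lemma~\ref{lem-a1} (stability of the limiting ODEs for a non-contractive, non-stationary finite horizon Bellman-type operator), but that claim was already handled by appeal to the backward-in-time structure of the DP recursion \eqref{DP1}--\eqref{DP2} together with the argument from \cite{avgcostABB}, leaving nothing further to do here beyond citing Theorems~\ref{thm1} and~\ref{thm2}.
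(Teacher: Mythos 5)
Your proposal matches the paper's own proof essentially verbatim: both verify Assumption~\ref{a1} via Proposition~\ref{prop1-a1} and Lemma~\ref{lem-a1}, Assumption~\ref{a2} via Proposition~\ref{prop-mart}, Assumption~\ref{a3} via the step-size conditions, and then conclude by invoking Theorem~\ref{thm1} and Theorem~\ref{thm2}. No gaps beyond those already present in the cited supporting results; the approach is the same.
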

\begin{proof}
Assumption~\ref{a1} is shown as a consequence of Proposition~\ref{prop1-a1} and Lemma~\ref{lem-a1}. Proposition~\ref{prop-mart} shows that Assumption~\ref{a2} holds.
Finally, Assumption~\ref{a3} can be seen to easily hold for our choice of step-sizes, viz.,
${\displaystyle a(m) = \ceil*{\frac{1}{(m+1)/10}}}$. The claim now follows from Theorem~\ref{thm1}--Theorem~\ref{thm2}.
\end{proof}

\section{Experiments and Results}
\label{experiments}

We implemented our
Finite Horizon Q-learning algorithm on the setting of random MDPs for different combinations of the following triplet $(N, |S|, |A|)$ of number of stages, number of states and number of actions, respectively. The algorithm was terminated in each experiment for the following termination condition:
\(\parallel Q_{prev} - Q_{curr} \parallel\  \leq \ \epsilon, \mbox{ where } \)
\begin{itemize}
    \item $Q_{prev}$: Vector of Q-values of size $N\times |S|\times |A|$ obtained at the previous iteration
    \item $Q_{curr}$: Vector of Q-values of size $N\times |S|\times |A|$ obtained at the current iteration
    \item $\epsilon$: a small tolerance or threshold level (set to $0.05$ in the experiments)
\end{itemize} 
The various quantities used in the following are explained below.
\begin{itemize}
\item $Q^*$: Final value of the Q-value function (viewed as a vector over $N\times S\times A$) as obtained from the Finite Horizon Q-learning Algorithm.
\item $Q^{DP}$: Final value of the Q-value function (viewed as a vector over $N\times S\times A$) as computed by the finite horizon dynamic programming algorithm \eqref{DP1}-\eqref{DP2}.
\item \text{Error} $=\parallel Q^* - Q^{DP} \parallel$ is the norm difference between $Q^*$ and $Q^{DP}$. 
    \item $J_n^*$: Value function at stage $n$ (having $|S|$ components) as computed by the algorithm.
    \item $\pi_n^*$: Optimal policy function at stage $n$, having $|S|$ components.
\end{itemize}

We mention here that since the experimental setting was of random MDPs, we could make use of the transition probability information for running the DP algorithm \eqref{DP1}-\eqref{DP2} and estimate the error across various runs when using our Finite Horizon Q-learning algorithm. 

\begin{table}[H]
\begin{center}
\begin{tabular}{|c|c|c|c|}
\hline
\textbf{$(N, |S|, |A|)$} & $\boldsymbol{\epsilon}$ & \textbf{Error}  & \textbf{\begin{tabular}[c]{@{}c@{}}Number of \\ Iterations\end{tabular}}\\ \hline
$(20, 50, 10)$ & 0.05 & 6.8720 & 82589 \\ \hline
$(20, 20, 10)$ & 0.05 & 3.5319 & 38268 \\ \hline
$(10, 20, 10)$ & 0.05 & 2.1433 & 28603 \\ \hline
$(10, 5, 5)$   & 0.05 & 1.2491 &  7507 \\ \hline
\end{tabular}
\end{center}
\caption{Algorithm performance in different settings.}
\label{summary}
\end{table}

For the four different settings of $(N,|S|,|A|)$ as shown in Table \ref{summary} , seven plots are provided for each of the settings. These individually correspond to the value function  and policy obtained upon termination of the algorithm for the first stage, the middle stage and penultimate stage as well as error as a function of number of iterations (see above) as obtained when using the finite horizon Q-learning algorithm.
Table \ref{summary} summarizes the performance in terms of the Error metric and the number of iterations needed in each of the four settings for the algorithm in order for norm-difference between the Q-values in the current and previous iterates of the algorithm to fall below $\epsilon=0.05$. From the plots we see that (a) the error in each case diminishes as the number of iterates increases, (b) the value function as obtained from the Finite Horizon Q-learning algorithm closely resembles the value function obtained using the DP algorithm \eqref{DP1}-\eqref{DP2} and (c) the plot of the policy function at different stages indicates that in general the best actions for the various states are quite different. 

\begin{figure}
     \centering
     \begin{subfigure}[b]{0.4\textwidth}
         \centering
         \includegraphics[width=\textwidth]{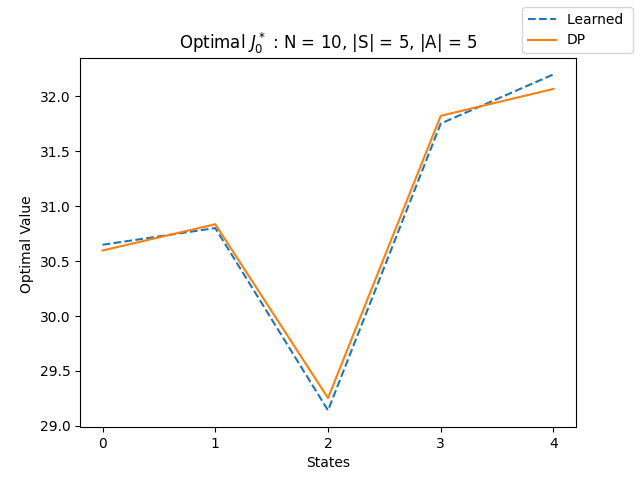}
         \caption{Estimated Value Function}
         \label{fig:value_10_5_5_h0}
     \end{subfigure}
     \hfill
     \begin{subfigure}[b]{0.4\textwidth}
         \centering
         \includegraphics[width=\textwidth]{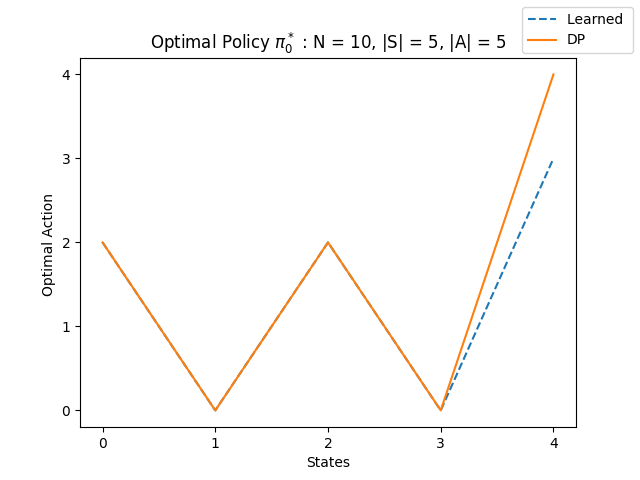}
         \caption{Estimated Policy}
         \label{fig:policy_10_5_5_h0}
     \end{subfigure}
     \caption{Performance for Stage 0 of (10, 5, 5) Setting}
\end{figure}

\begin{figure}
     \centering
     \begin{subfigure}[b]{0.4\textwidth}
         \centering
        \includegraphics[width=\textwidth]{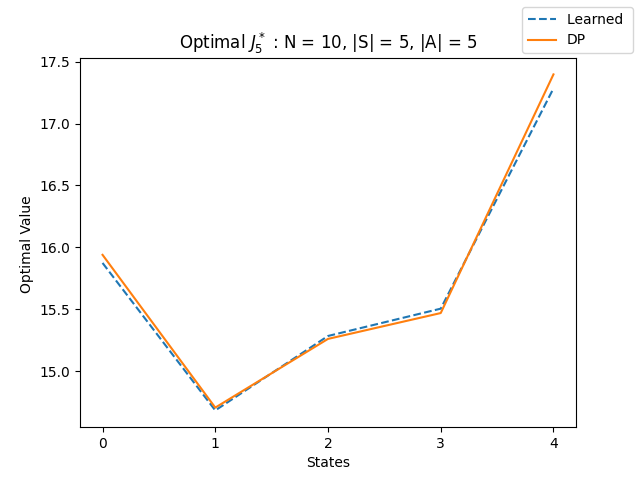}
        \caption{Estimated Value Function}
         \label{fig:value_10_5_5_h5}
    \end{subfigure}
     \hfill
     \begin{subfigure}[b]{0.4\textwidth}
         \centering
        \includegraphics[width=\textwidth]{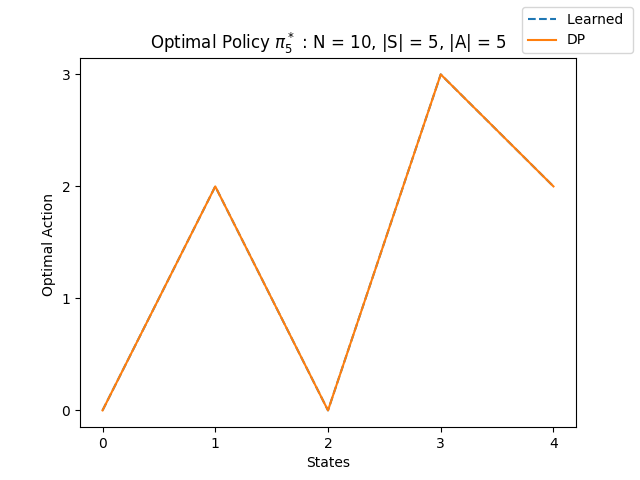}
         \caption{Estimated Policy}
         \label{fig:policy_10_5_5_h5}
    \end{subfigure}
     \caption{Performance for Stage 5 of (10, 5, 5) Setting}
\end{figure}

\begin{figure}
     \centering
     \begin{subfigure}[b]{0.4\textwidth}
         \centering
         \includegraphics[width=\textwidth]{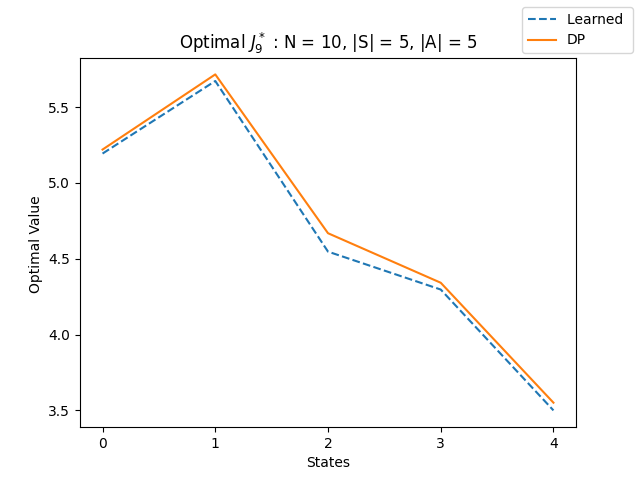}
         \caption{Estimated Value Function}
         \label{fig:value_10_5_5_h9}
     \end{subfigure}
     \hfill
     \begin{subfigure}[b]{0.4\textwidth}
         \centering
         \includegraphics[width=\textwidth]{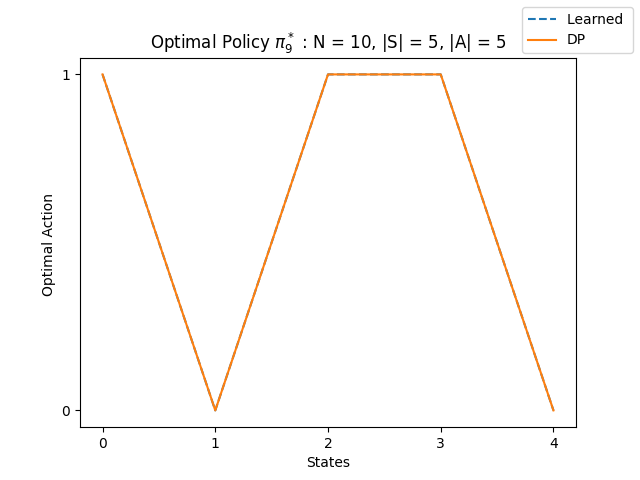}
         \caption{Estimated Policy}
         \label{fig:policy_10_5_5_h9}
     \end{subfigure}
     \caption{Performance for Stage 9 of (10, 5, 5) Setting}
\end{figure}


 
\begin{figure}
     \centering
     \begin{subfigure}[b]{0.4\textwidth}
         \centering
         \includegraphics[width=\textwidth]{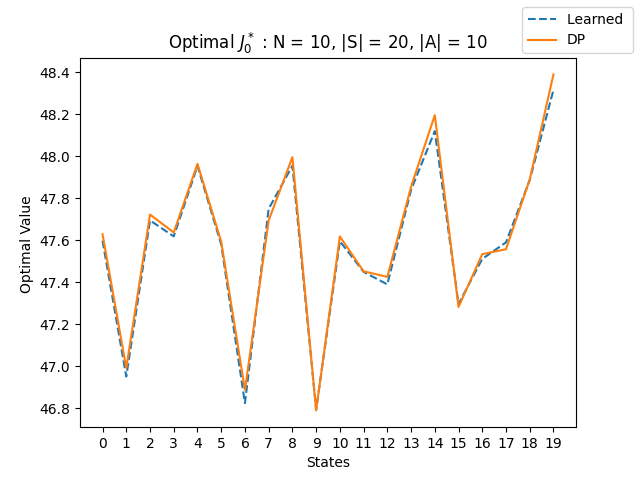}
         \caption{Estimated Value Function}
         \label{fig:value_10_20_10_h0}
     \end{subfigure}
     \hfill
     \begin{subfigure}[b]{0.4\textwidth}
         \centering
         \includegraphics[width=\textwidth]{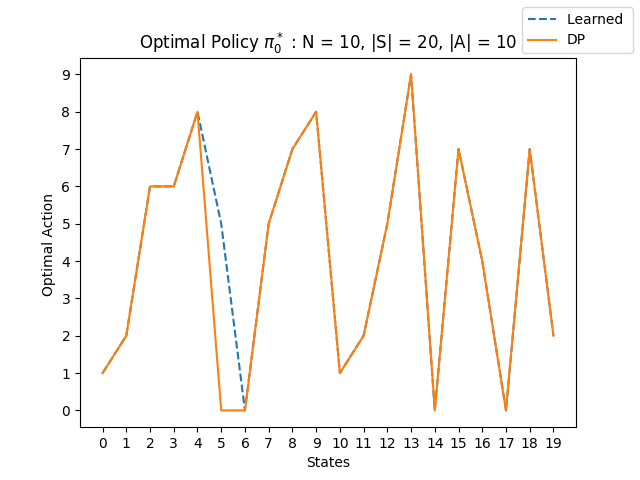}
         \caption{Estimated Policy}
         \label{fig:policy_10_20_10_h0}
     \end{subfigure}
     \caption{Performance for Stage 0 of (10, 20, 10) Setting}
\end{figure}

\begin{figure}
    \centering
    \begin{subfigure}[b]{0.4\textwidth}
        \centering
        \includegraphics[width=\textwidth]{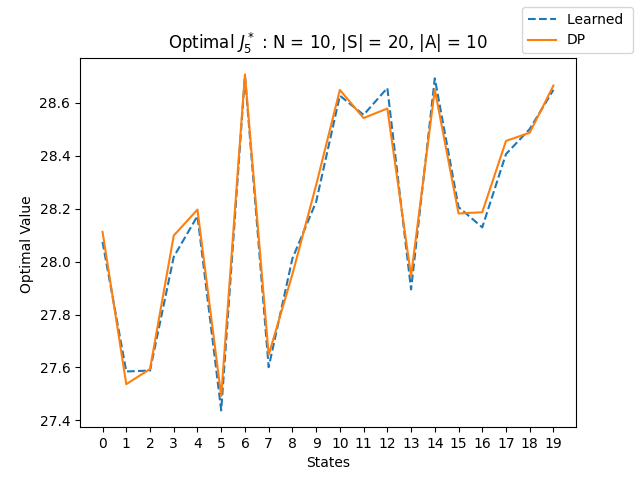}
        \caption{Estimated Value Function}
        \label{fig:value_10_20_10_h5}
    \end{subfigure}
    \hfill
    \begin{subfigure}[b]{0.4\textwidth}
        \centering
        \includegraphics[width=\textwidth]{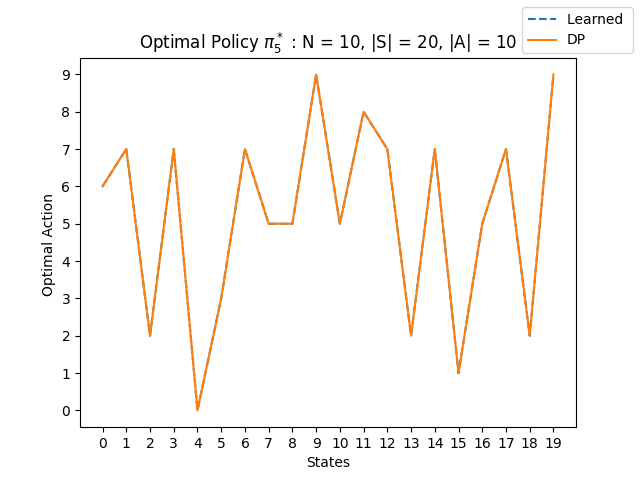}
        \caption{Estimated Policy}
        \label{fig:policy_10_20_10_h5}
    \end{subfigure}
    \caption{Performance for Stage 5 of (10, 20, 10) Setting}
\end{figure}

\begin{figure}
     \centering
     \begin{subfigure}[b]{0.4\textwidth}
         \centering
         \includegraphics[width=\textwidth]{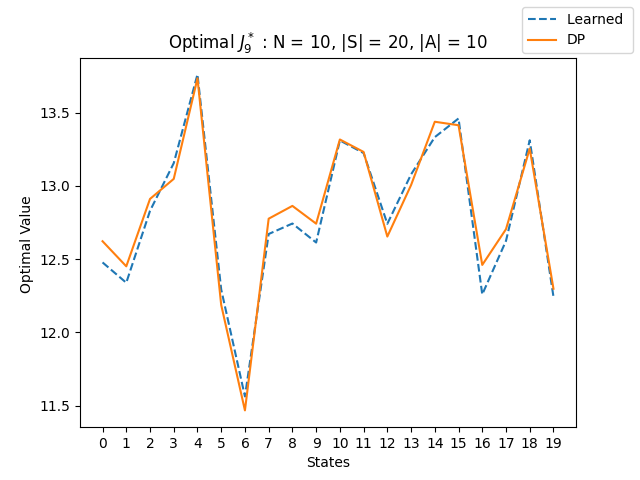}
         \caption{Estimated Value Function}
         \label{fig:value_10_20_10_h9}
     \end{subfigure}
     \hfill
     \begin{subfigure}[b]{0.4\textwidth}
         \centering
         \includegraphics[width=\textwidth]{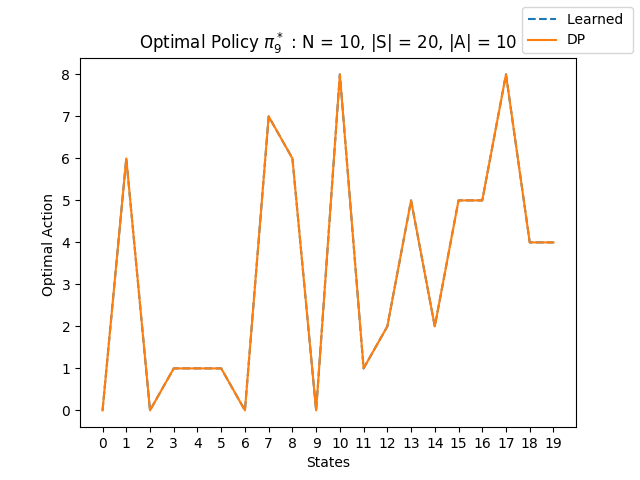}
         \caption{Estimated Policy}
         \label{fig:policy_10_20_10_h9}
     \end{subfigure}
     \caption{Performance for Stage 9 of (10, 20, 10) Setting}
\end{figure}

\begin{figure}
     \centering
     \begin{subfigure}[b]{0.4\textwidth}
         \centering
         \includegraphics[width=\textwidth]{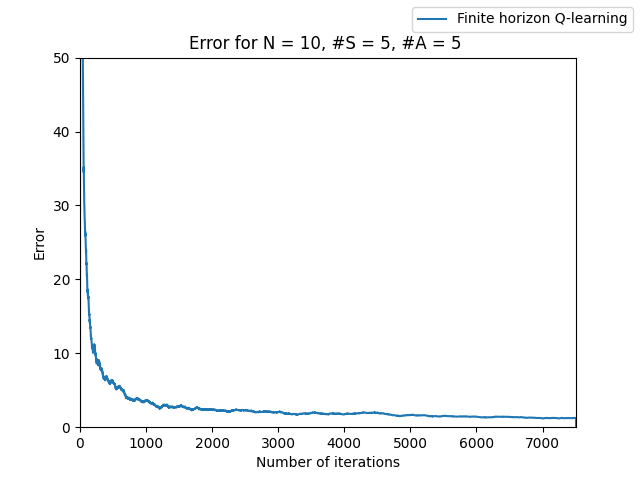}
         \caption{Error Plot for (10, 5, 5) Setting}
         \label{fig:error_10_5_5}
     \end{subfigure}
     \hfill
     \begin{subfigure}[b]{0.4\textwidth}
         \centering
         \includegraphics[width=\textwidth]{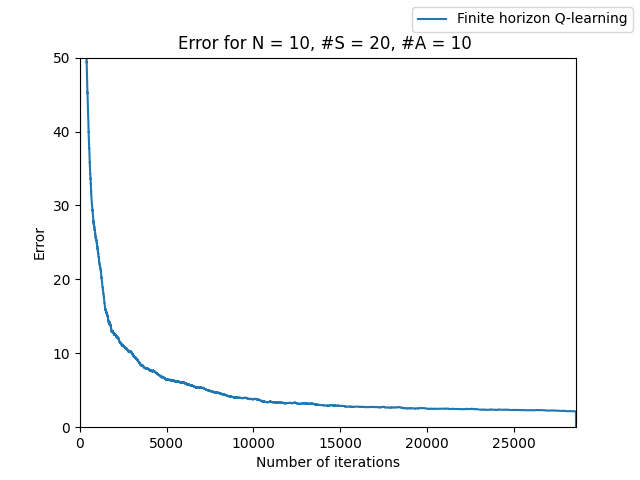}
         \caption{Error Plot for (10, 20, 10) Setting}
         \label{fig:error_10_20_10}
     \end{subfigure}
     \caption{Error Plots for Settings (10, 5, 5) and (10, 20, 10)}
\end{figure}


\begin{figure}
     \centering
     \begin{subfigure}[b]{0.4\textwidth}
         \centering
         \includegraphics[width=\textwidth]{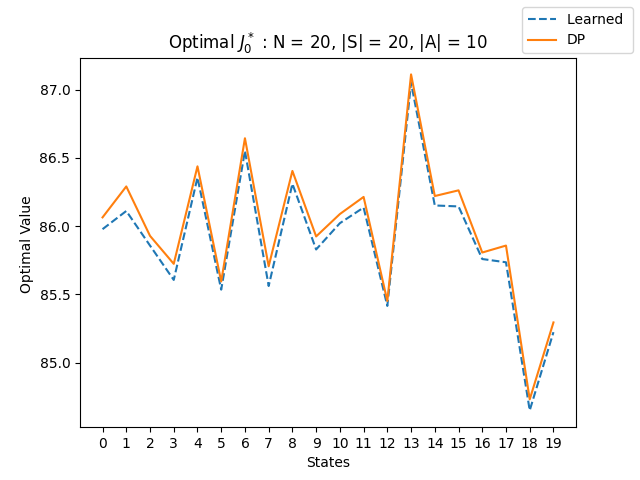}
         \caption{Estimated Value Function}
         \label{fig:value_20_20_10_h0}
     \end{subfigure}
     \hfill
     \begin{subfigure}[b]{0.4\textwidth}
         \centering
         \includegraphics[width=\textwidth]{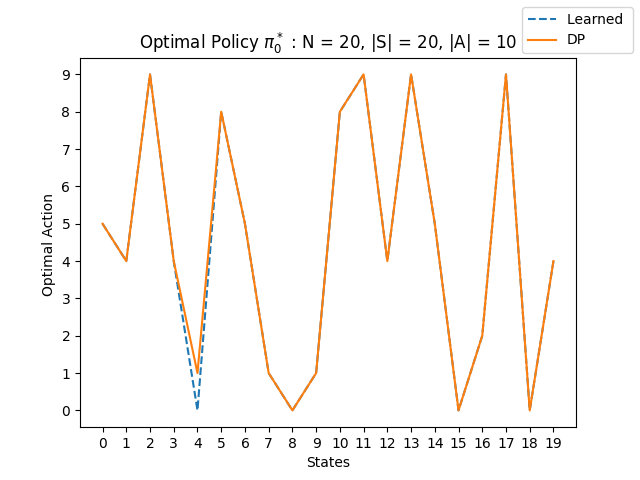}
         \caption{Estimated Policy}
         \label{fig:policy_20_20_10_h0}
     \end{subfigure}
     \caption{Performance for Stage 0 of (20, 20, 10)}
\end{figure}

\begin{figure}
    \centering
    \begin{subfigure}[b]{0.4\textwidth}
        \centering
        \includegraphics[width=\textwidth]{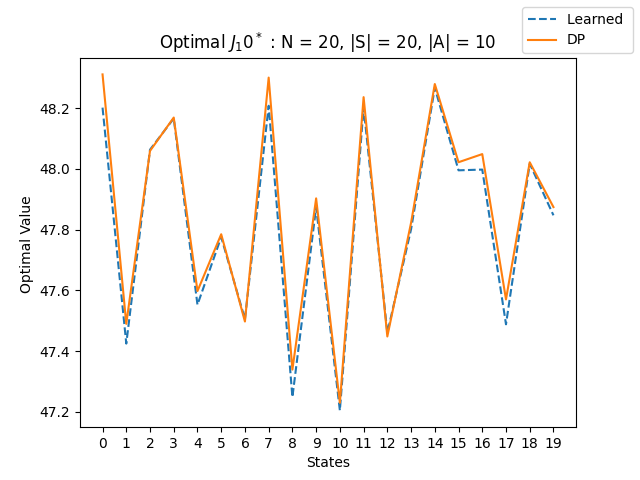}
        \caption{Estimated Value Function}
        \label{fig:value_20_20_10_h5}
    \end{subfigure}
    \hfill
    \begin{subfigure}[b]{0.4\textwidth}
        \centering
        \includegraphics[width=\textwidth]{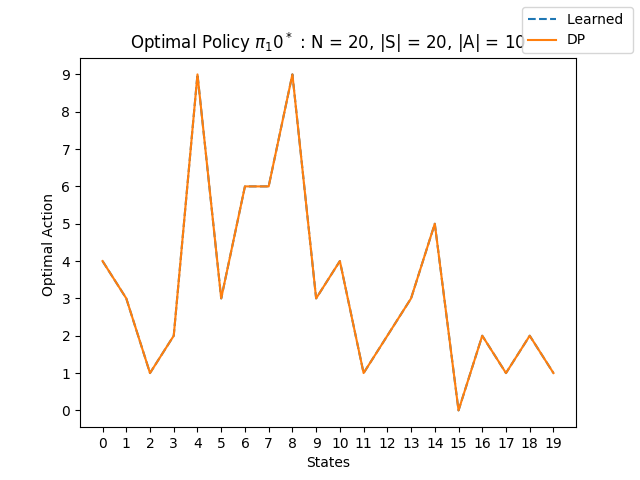}
        \caption{Estimated Policy}
        \label{fig:policy_20_20_10_h5}
    \end{subfigure}
    \caption{Performance for Stage 10 of (20, 20, 10) Setting}
\end{figure}

\begin{figure}
     \centering
     \begin{subfigure}[b]{0.4\textwidth}
         \centering
         \includegraphics[width=\textwidth]{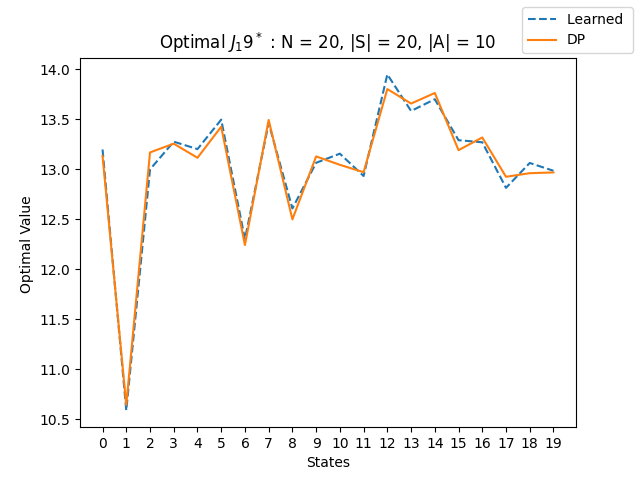}
         \caption{Estimated Value Function}
         \label{fig:value_20_20_10_h9}
     \end{subfigure}
     \hfill
     \begin{subfigure}[b]{0.4\textwidth}
         \centering
         \includegraphics[width=\textwidth]{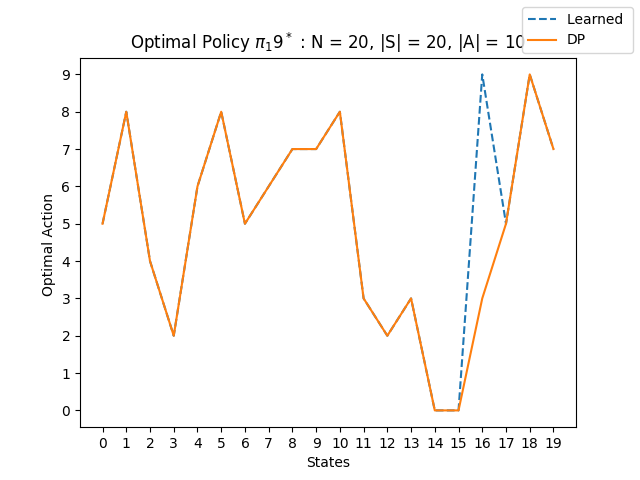}
         \caption{Estimated Policy}
         \label{fig:policy_20_20_10_h9}
     \end{subfigure}
     \caption{Performance for Stage 19 of (20, 20, 10)}
\end{figure}



\begin{figure}[htbp]
     \centering
     \begin{subfigure}[b]{0.4\textwidth}
         \centering
         \includegraphics[width=\textwidth]{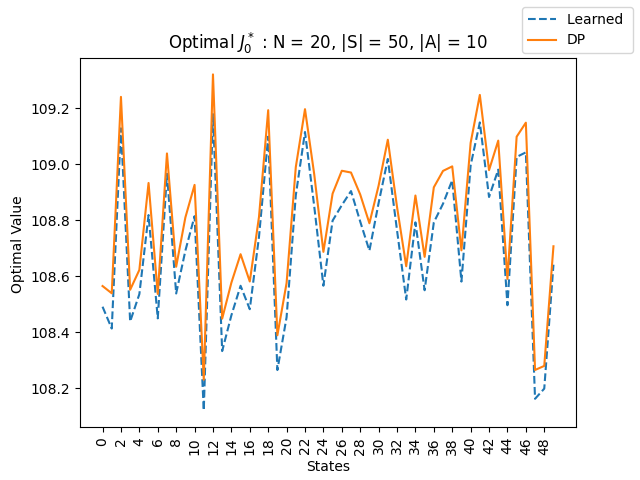}
         \caption{Estimated Value Function}
         \label{fig:value_20_50_10_h0}
     \end{subfigure}
     \hfill
     \begin{subfigure}[b]{0.4\textwidth}
         \centering
         \includegraphics[width=\textwidth]{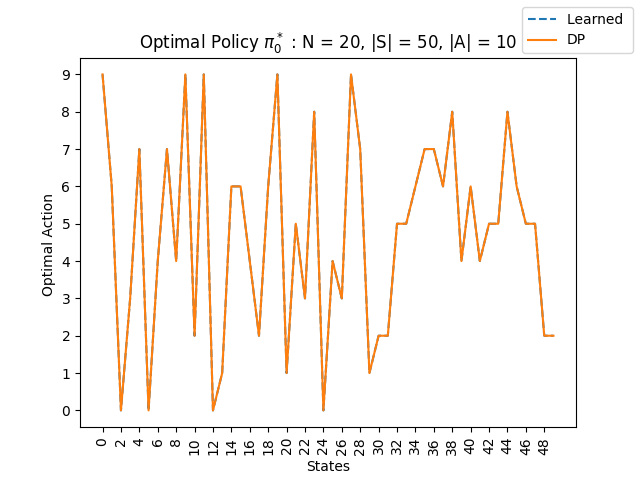}
         \caption{Estimated Policy}
         \label{fig:policy_20_50_10_h0}
     \end{subfigure}
     \caption{Performance for Stage 0 of (20, 50, 10)}
\end{figure}

\begin{figure}[htbp]
     \centering
     \begin{subfigure}[b]{0.4\textwidth}
         \centering
         \includegraphics[width=\textwidth]{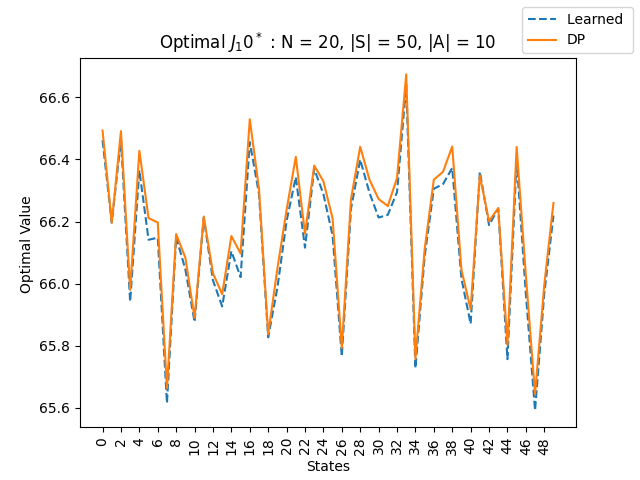}
         \caption{Estimated Value Function}
         \label{fig:value_20_50_10_h10}
     \end{subfigure}
     \hfill
     \begin{subfigure}[b]{0.4\textwidth}
         \centering
         \includegraphics[width=\textwidth]{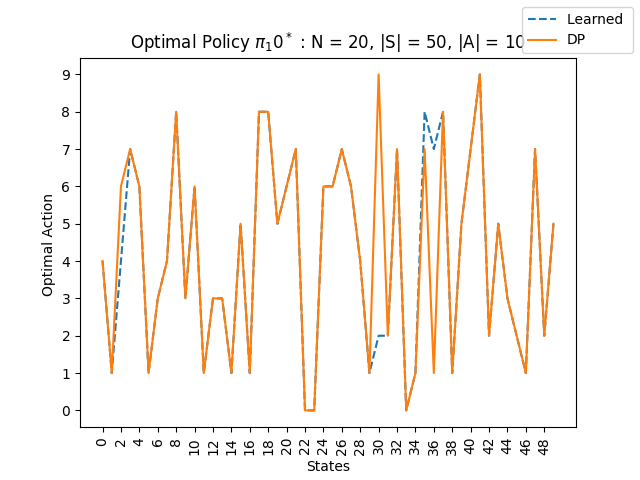}
         \caption{Estimated Policy}
         \label{fig:policy_20_50_10_h10}
     \end{subfigure}
     \caption{Performance for Stage 10 of (20, 50, 10) Setting}
\end{figure}

\begin{figure}[htbp]
     \centering
     \begin{subfigure}[b]{0.4\textwidth}
         \centering
         \includegraphics[width=\textwidth]{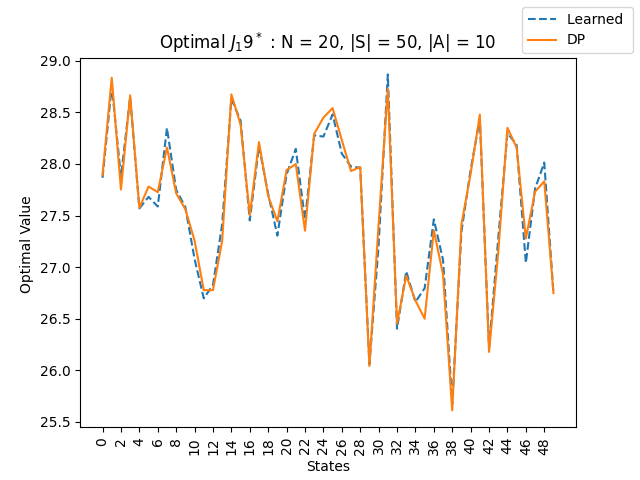}
         \caption{Estimated Value Function}
         \label{fig:value_20_20_10_h19}
     \end{subfigure}
     \hfill
     \begin{subfigure}[b]{0.4\textwidth}
         \centering
         \includegraphics[width=\textwidth]{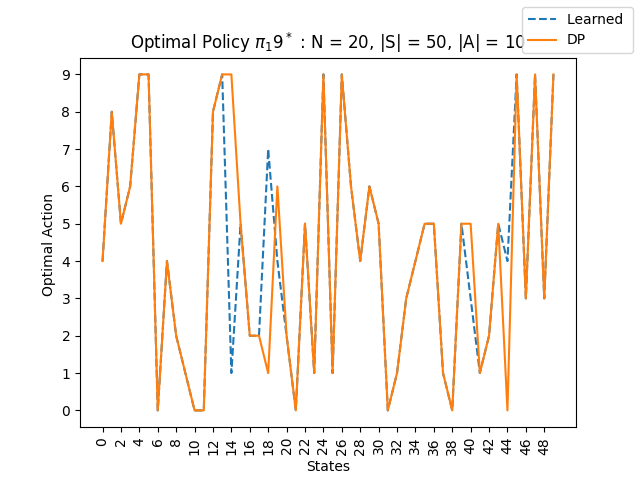}
         \caption{Estimated Policy}
         \label{fig:policy_20_50_10_h19}
     \end{subfigure}
     \caption{Performance for Stage 19 of (20, 50, 10)}
\end{figure}

\begin{figure}[htbp]
     \centering
     \begin{subfigure}[b]{0.4\textwidth}
         \centering
         \includegraphics[width=\textwidth]{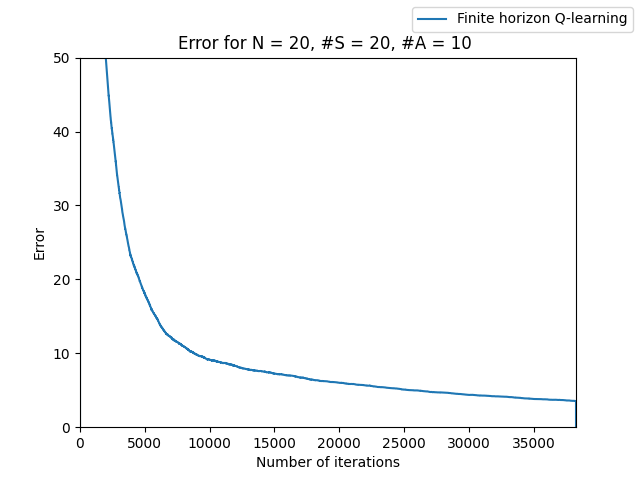}
         \caption{Error Plot for (20, 20, 10) Setting}
         \label{fig:error_20_20_10}
     \end{subfigure}
     \hfill
     \begin{subfigure}[b]{0.4\textwidth}
         \centering
         \includegraphics[width=\textwidth]{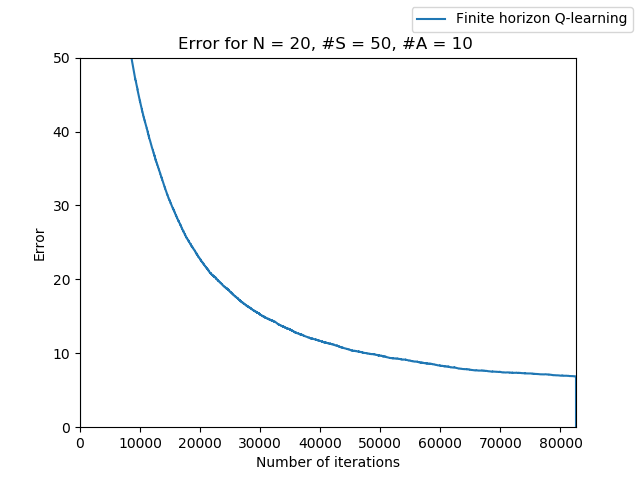}
         \caption{Error Plot for (20, 50, 10) Setting}
         \label{fig:error_20_50_10}
     \end{subfigure}
     \caption{Error Plots for Settings (20, 20, 10) and (20, 50, 10)}
\end{figure}


\section{Application on Smart Grid}
Electricity is a very important resource for the world to prosper. But its production and distribution involves a lot of challenges. Especially in countries with large geographical area, one find its distribution from a production centre to a far places challenging and very expensive.This can be solved partially with solutions like localized production from renewable sources such as solar. There may not be enough production sometimes, in that case one has to resort to buying from main grid. Also battery stores energy bought from main grid as well as other sources. Energy is spend from battery to meet the demand. These decisions have to be done in an intelligent manner so that overall expense is reduced. 
There are previous works on applying reinforcement learning techniques for
solving energy management is smart grids.  

Now we give a brief survey of various works on efficient energy management in smart grid using various approaches including reinforcement learning. Both demand side and supply side of the efficient energy management problem has been studied in \cite{unifiedRaghu} in a unified framework. Moreover this work extends to the flexibility of ADL (Activities of Daily Living) jobs in the same framework. This helps in reducing the peak load, which helps in reducing the infrastructure for production. The importance of smart grid for reducing the loss, to manage increasing demands and the overall massive transformation the smart grid will potentially bring in the energy sector has been discussed in \cite{farhangi}. One approach for managing pricing issue in distribution is that of \textit{broker agent} concept. This along with reinforcement learning is used to learn strategies for pricing in \cite{broker}. There are recent game theoretic approaches as well for smart grid management as in \cite{sGame}. This work allows microgrids to decide price contrary to the common approach where central main grid decides the price.
We formulate the problem of finding the optimal decision for the microgrid as a finite horizon MDP. We briefly discuss the overall process at a microgrid. Customers makes a demand of $d$ units in total to a microgrid. Microgrid already have a $b$ units  of power stored in its battery . Microgrid also knows the current price $p$, mostly dictated by maingrid. Microgrid also generates a $r$ units. These are the $4$ quantities that captures the process at microgrid at a time step. With these $4$ quantities microgrid should make decisions on two quantities, number of  units to buy from maingrid denoted by $u1$ and number of units to spent from battery ($u2$). Also total number of time steps or stages is denoted by $N$

\subsection{Terminology}
Following are the terms used:

\begin{itemize}
\item $d$ : demand from the customers end in number of units.
\item $b$ : number of units stored in the battery currently.
\item $p$ : current price of the power per unit.
\item $r$ : amount of renewal energy produced currently.
\item $u_1$ : action at main grid (number of units to be purchased from main grid)
\item $u_2$ : action at battery (number of units to be spent from battery)
\end{itemize}
 
\subsection{MDP formulation}
Now we will formulate this as a finite horizon MDP. For that we define the state, action and reward as follows:
\begin{itemize}
    \item State: $s = (d,b,p)$, a 3-tuple. 
    \item Action: $a = (u1, u2)$, an ordered pair.
    \item Reward: $g(s,a) = c*(d-u2) + p*u1$, c is a constant, c=1 in experiments.
    \item Probability: demand, price and renewal energy are coming from a probability distribution.
    \item Stage: $n$, current time stage, $n=0,1,\ldots,N-1$
    \item Terminal Stage reward: $g_N(s) = 0 , \forall s$
\end{itemize}
Now that we have all the parameters of the MDP we proceed towards finding the optimal policy using Finite Horizon Q-learning.

\subsection{Applying Q-learning}
Following are the updates upon applying $u1$ and $u2$ on the current state.
\begin{itemize}
    \item Demand update: next demand = drawn from a Markov chain.
    \item Battery update: 
    \begin{itemize}
        \item current battery = current battery + u1
        \item current battery = current battery - u2
        \item current battery = current bttery + r
        \item current battery = min ( current battery, maximum battery capacity)
        \item current battery = max (current battery, 0)
     \end{itemize}
    \item Price update: next price = drawn from a Markov chain
    \item Renewal energy update: 
\end{itemize}

These updates are applied in the finite horizon Q-learning algorithm. Average cost obtained are contrasted against two other basline algorithms as given below.\\

"fill demand" algorithm:\\
u1 : max(d-b, 0)\\
u2 : min(d, b)\\

"fill battery" algorithm: \\
u1: d + (maximum battery - b)\\ 
u2 : min(d,b)\\

Average cost for different scenarios is given in table below. We can see that Finite Horizon Q-learning outperforms baseline algorithms significantly. Comparing tables \ref{AlgoComparison1}  and \ref{AlgoComparison2} we also note that availability of renewal energy reduce average cost significantly, as expected.
\begin{table}[H]
\begin{center}
\begin{tabular}{|c|c|c|c|}
\hline
\textbf{$(h,d,b,p)$} & $FHQL$ & \textbf{$Fill~ Demand.$}  & \textbf{$Fill~ Battery.$}\\ \hline
$(10,4,4,4)$ & 2.357318   & 3.991445  & 5.662942  \\ \hline
$(20,4,4,4)$ & 2.493130   & 3.992053  & 5.659612  \\ \hline
$(20,5,5,5)$ & 2.931757   & 5.445172  & 8.141613 \\ \hline
\end{tabular}
\end{center}
\caption{Comparison of Algorithms without renewal energy (Average Cost)}
\label{AlgoComparison1}
\end{table}

\begin{table}[H]
\begin{center}
\begin{tabular}{|c|c|c|c|}
\hline
\textbf{$(h,d,b,p)$} & $FHQL$ & \textbf{$Fill~ Demand.$}  & \textbf{$Fill~ Battery.$}\\ \hline
$(10,4,4,4)$ & 0.115206   & 0.78571  & 5.66239  \\ \hline
$(20,4,4,4)$ & 0.274906   & 0.790913  & 5.656929 \\ \hline
$(20,5,5,5)$ & 0.439671   & 1.122509  & 8.138732 \\ \hline
\end{tabular}
\end{center}
\caption{Comparison of Algorithms with renewal energy (Average Cost)}
\label{AlgoComparison2}
\end{table}

\section{Conclusions and Future Work}
\label{conclusions}


We presented the Q-learning algorithm for finite horizon MDPs and gave a complete proof of stability and convergence to the set of optimal Q-values. Experiments indicate that the algorithm achieves almost the same Q-values as the dynamic programming algorithm that works with full model information. It will be interesting to develop RL algorithms with function approximation in the future. Experiments on more detailed settings should also be performed in the future.

\printbibliography

\end{document}